\newcommand{\dR}{\mathbb{R}}
\newcommand{\dE}{\mathbb{E}}
\newcommand{\dP}{\mathbb{P}}
\newcommand{\cC}{\mathcal{C}}
\newcommand{\cN}{\mathcal{N}}
\newcommand{\cR}{\mathcal{R}}
\newcommand{\cS}{\mathcal{S}}
\DeclarePairedDelimiterXPP{\Pb}[1]{\mathbb{P}}{\lparen}{\rparen}{}{ #1}
\DeclarePairedDelimiterXPP{\E}[1]{\mathbb{E}}[]{}{ #1}
\DeclarePairedDelimiterX{\Set}[1]\lbrace\rbrace{ #1}
\DeclareMathOperator{\tr}{tr}
\DeclareMathOperator*{\argmin}{arg\,min}
\DeclareMathOperator{\prox}{prox}
\DeclarePairedDelimiterX{\norm}[1]\lVert\rVert{\ifblank{#1}{\: \cdot \:}{#1}}
\newcommand{\bilingualcommand}[3]{%
	\newcommand{#1}[1][\ ]{%
		##1%
		\iflanguage{english}{\text{#2}}{%
			\iflanguage{french}{\text{#3}}{}%
		}%
		##1%
	}%
}
\bilingualcommand{\where}{where}{où}
\bilingualcommand{\textif}{if}{si}
\bilingualcommand{\textand}{and}{et}
\bilingualcommand{\textiff}{if and only if}{si et seulement si}
\bilingualcommand{\otherwise}{otherwise}{sinon}
\newtheorem*{rep@theorem}{\rep@title}
\newcommand{\newreptheorem}[2]{%
\newenvironment{rep#1}[1]{%
 \def\rep@title{#2 \ref{##1}}%
 \begin{rep@theorem}}%
 {\end{rep@theorem}}}
\newtheorem{theorem}{Theorem}
\newtheorem{lemma}{Lemma}
\begin{document}

\preprint{APS/123-QED}

\title{Gaussian Universality of Perceptrons with Random Labels}

\author{Federica Gerace$^{1,4}$}
\author{Florent Krzakala$^2$}%
\author{Bruno Loureiro$^{2,3}$}
\author{Ludovic Stephan$^2$}
\author{Lenka Zdeborov\'a$^4$}
\affiliation{
$^1$ International School of Advanced Studies (SISSA). Trieste, Italy.
}%
\affiliation{%
 $^2$EPFL, Information, Learning and Physics (IdePHICS) lab., Lausanne,  Switzerland 
}
\affiliation{%
 $^3$D\'epartement d'Informatique, \'Ecole Normale Sup\'erieure (ENS) - PSL \& CNRS,  F-75230 Paris cedex 05, France}
\affiliation{%
$^4$EPFL Statistical Physics of Computation (SPOC) lab.,Lausanne,  Switzerland 
}

\date{\today}

\begin{abstract}
While classical in many theoretical settings --- and in particular in statistical physics-inspired works--- the assumption of Gaussian {\it i.i.d.} input data is often perceived as a strong limitation in the context of statistics and machine learning. In this study, we redeem this line of work in the case of generalized linear classification,  a.k.a. the perceptron model, with random labels. 
We argue that there is a large universality class of high-dimensional input data for which we obtain the same minimum training loss as for Gaussian data with corresponding data covariance. In the limit of vanishing regularization, we further demonstrate that the training loss is independent of the data covariance. On the theoretical side, we prove this universality for an arbitrary mixture of homogeneous Gaussian clouds.
Empirically, we show that the universality holds also for a broad range of real datasets. 
\end{abstract}

\maketitle


\section{Introduction}
\label{Introduction}
Statistical physics studies of artificial neural networks have a long history, including many works that continue to have an impact on the current investigations of deep neural networks. A large fraction of this continuing line of works has focused on Gaussian input data, see \cite{gardner1989three,krauth1989storage,seung1992statistical} for some of the earliest and most influential examples. However, 
the Gaussian data assumption is not limited to works from statistical physics of learning. Indeed, it is a widespread assumption in the high-dimensional statistics literature, where it is also known under the umbrella of \emph{Gaussian design}, see for example \cite{donoho2009observed,candes2020phase, Bartlett30063}. 
Despite being both common and convenient for doing theory, {\it i.i.d.} Gaussian data might come across as a stringent limitation at first sight, out-of-touch with the real-world practice where data is structured. Indeed, an important branch of statistical learning theory is data-agnostic and avoids making too specific assumptions on the data distribution \cite{shalev2014understanding}.
However, a number of recent observations (both heuristic and rigorous) suggest that the Gaussian assumption is not always that far-stretched for high-dimensional data (see for instance \cite{goldt2019modelling,seddik_2020_random,bordelon2020spectrum,hu2020universality,loureiro2021learning} and references therein). The goal of the present work is to redeem the Gaussian hypothesis for perhaps the simplest, yet deeply fundamental, problem of high-dimensional statistics: the perceptron problem, a.k.a. generalized linear classification, with random labels.

Models with random labels are ubiquitous in the theory of machine learning. The problem of how many randomly labelled Gaussian patterns a perceptron model can fit, known as the \emph{storage capacity problem}, is at the root of the historical interest of the statistical physics community for machine learning problems. Indeed, works on this classical subject span more than four decades \cite{gardner1989three,krauth1989storage,brunel1992information,franz2017universality,ding2019capacity,aubin2019storage,abbaras2020rademacher,montanari2021tractability,alaoui2020algorithmic}. The interest for random labels is also not bound to the statistical physics of learning community. They appear in several contexts in statistical learning theory, such as in the definition of Rademacher complexities \cite{shalev2014understanding,vapnik1999nature}, in Wendel/Cover's pioneering studies \cite{wendel1962problem,cover1965geometrical} or in thought-provoking numerical experiments with deep learning \cite{zhang2021understanding,maennel2020neural},  

In this work, we ask: how would these theories for random labels change if using a realistic data set instead of a Gaussian one? We consider the training loss of generalized linear classifiers (perceptrons) trained on random labels, including ridge, hinge and logistic classification \cite{james2013introduction}, but also kernel methods \cite{steinwart2008support} and neural networks trained in the lazy regime \cite{chizat2018lazy} (the so-called neural tangent kernel \cite{jacot2018neural}), as well as with engineered features such as the scattering transform \cite{bruna2013invariant}. We focus on the thermodynamic limit (known as the high-dimensional setting in statistics) where both $n$ (the number of training samples) and $p$ (the input dimension) go to infinity at a fixed rate $\alpha=n/p$. 

Our main result is to argue that in the aforementioned setting with random labels many input data distributions actually have the same learning properties as Gaussian data, thus providing a rather surprising Gaussian universality result for this problem. In particular, the minimum training loss for a wide range of settings is the same as that of a corresponding Gaussian problem with matching data covariance. Furthermore, in the limit of vanishing regularization, we show that Gaussian universality is even stronger, as the minimum training loss is independent of the data covariance (and therefore the same as the one of \emph{i.i.d.} Gaussian data). In other words: as far as random labels are concerned, it turns out that the theoretical results derived under the Gaussian data assumptions capture what is actually happening in practice. Certainly, the value of the interpolation (or capacity) threshold was known to be universal and occurs (for full-rank data) at $n=p$ for ridge regression, and for $n=2p$ for linear classifiers (perceptrons) \cite{cover1965geometrical}; however, the fact that the loss itself is universal is a stronger statement that redeems an entire line of work using the Gaussian data assumption, and in particular a large part of those from statistical physics of learning.

\vspace{-5mm}

\subsection*{Summary of main results}
\vspace{-2mm}
The main points of the present work can be summarized by Figures \ref{fig:all_datasets_zero_reg} and \ref{fig:all_datasets_finite_reg}, which show the training loss of real-world data sets trained with random labels and various feature maps, compared with the (analytical) prediction derived for Gaussian data with matching covariance. The code used to run these experiments is publicly available in a \href{https://github.com/IdePHICS/RandomLabelsUniversality}{GitHub repository}. As illustrated in these plots, Gaussian universality  seems to hold even for finite-dimensional data, and for actual real datasets. Notably, we observe that when using random labels the training losses plotted as a function of the ratio between the number of samples and the dimension $\alpha=n/p$ are indistinguishable from results obtained for Gaussian input data when using MNIST \cite{lecun_1998_gradient}, fashion-MNIST \cite{xiao2017/online}, CIFAR10 \cite{krizhevsky_2009_learning} preprocessed through various standard feature maps. This conclusion seems robust and holds for different features of the raw data, such as random features \cite{rahimi2007random} or the convolutional scattering transform \cite{bruna2013invariant,andreux2020kymatio}. It also holds, as we prove, if we simply use a synthetic Gaussian mixture model, a classical model for complex multi-model data. The agreement between the real world and the asymptotic Gaussian theory is striking. While we may expect that such data could be approximated by a multimodal distribution such as a Gaussian mixture with enough modes, it should come as a rather puzzling fact that they lead to the same loss as a single Gaussian cloud. Our main contribution is to provide a rigorous theoretical foundation for these observations, that vindicates the classical line of works on Gaussian design, in particular the one stemming from statistical physics. 

We list here our \textbf{main results}:
\begin{figure}[t!]
\begin{center}
\centerline{\includegraphics[width=500pt]{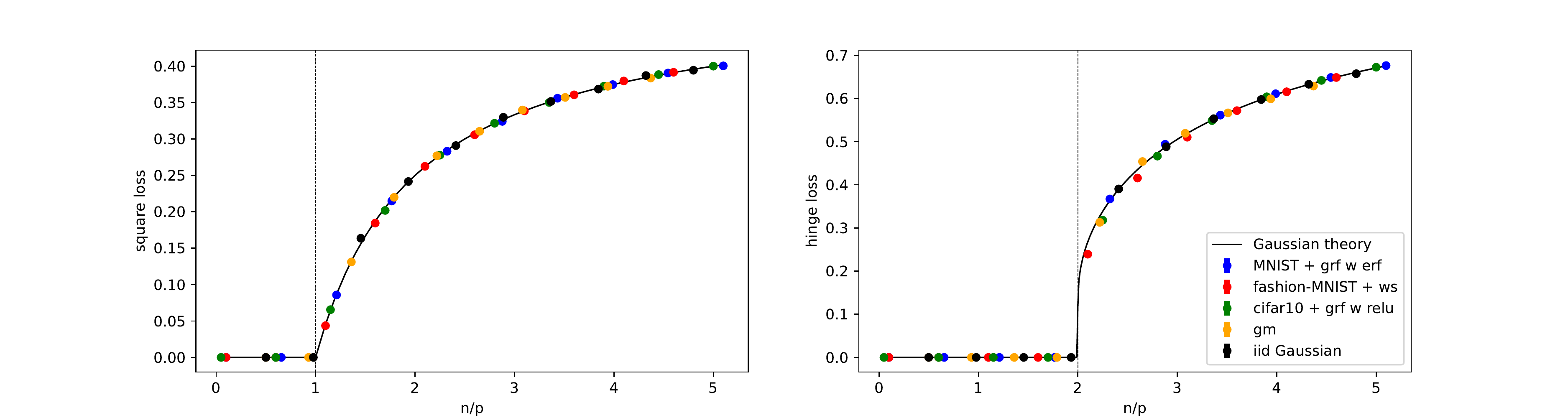}}
\caption{Training loss as function of the number of samples $n$ per input dimension $p$ at regularization $\lambda = 10^{-15}$. In the left panel the square loss, and in the right panel the hinge loss. The black solid line represents the outcome of the replica calculation for \emph{i.i.d} Gaussian inputs, namely when the covariance matrix $\Sigma$ corresponds to the identity matrix. Dots refer to numerical simulations on different full-rank datasets. In particular, blue dots correspond to MNIST with Gaussian random features and error function non-linearity, red dots correspond to fashion-MNIST with wavelet scattering transform, green dots correspond to CIFAR10 in grayscale with Gaussian random features and ReLU non-linearity, yellow dots corresponds to a mixture of Gaussians, with means $\bm{\mu}_{\pm} = \left( \pm 1, 0,...,0 \right)$, covariances $\Sigma_{\pm}$ both equal to the identity matrix and relative class proportions $\rho_{\pm} = \sfrac{1}{2}$. Finally, black dots correspond to \emph{i.i.d.} Gaussian inputs.
}
\label{fig:all_datasets_zero_reg}
\end{center}
\vskip -0.2in
\end{figure}
\begin{figure}[t!]
\begin{center}
\centerline{\includegraphics[width=500pt]{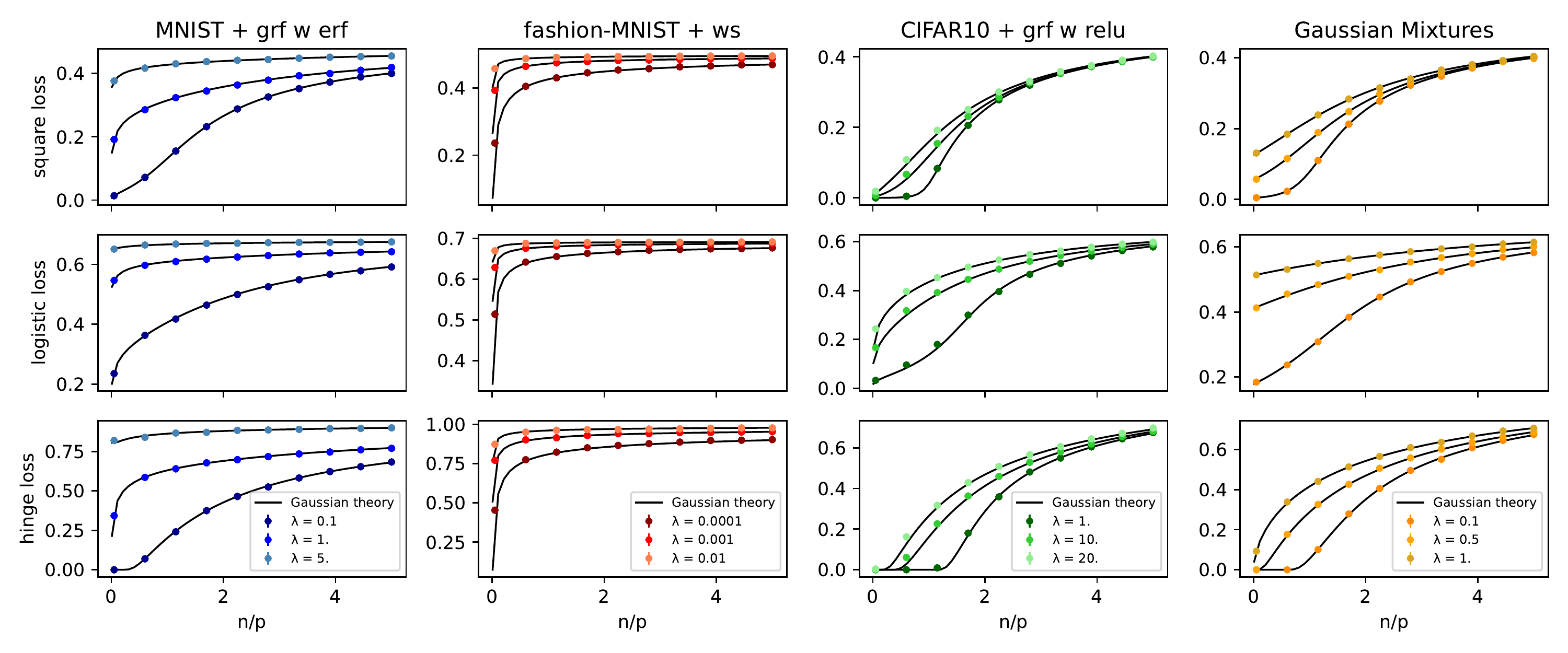}}
\caption{This figure shows the training loss as a function of the number of samples $n$ per dimension $p$ at finite regularization $\lambda$. In the top panel the square loss, and in the bottom panel the hinge loss. The first column refers to MNIST with Gaussian random features and error function non-linearity, the second column corresponds to fashion-MNIST with wavelet scattering transform, the third column corresponds to CIFAR10 in grayscale with Gaussian random features and ReLU non-linearity, the fourth column corresponds to a mixture of Gaussians, with means $\bm{\mu}_{\pm} = \left( \pm 1, 0,...,0 \right)$, covariances $\Sigma_{\pm}$ both equal to the identity matrix and relative class proportions $\rho_{\pm} = \sfrac{1}{2}$. Black solid lines correspond to the outcome of the replica calculation, obtained by assigning to $\Sigma$ the covariance matrix of each dataset plus the corresponding transformation.  The coloured dots correspond to the simulations for different values of $\lambda$, as specified in the plot legend. Simulations are averaged over $10$ samples \& the error bars are not visible at the plot scale.}
\label{fig:all_datasets_finite_reg}
\end{center}
\vskip -0.4in
\end{figure}
\begin{itemize}
    \item[a)] We provide a strong universality theorem for linear interpolators corresponding to ridgeless regression (with vanishing regularization) in high-dimensions and random labels, Theorem \ref{thm:ols}. Informally, we prove that a perceptron trained on randomly labelled Gaussian mixture data (a setting that encompasses complex multi-modal distributions) has the same minimum asymptotic loss as a perceptron trained on randomly labelled Gaussian data with isotropic covariance, that is $\mathcal{E}_{\ell}(\alpha)=\sfrac{1}{2}(1-\sfrac{1}{\alpha})_{+}$. This provides a theoretical explanation for the phenomena illustrated in Fig. \ref{fig:all_datasets_zero_reg} left.
    \item[b)] Under an additional homogeneity assumption on the different modes of the data, Gaussian universality can be generalised to {\it any convex loss} (and we conjecture that it is valid for non-convex losses as well), Theorem \ref{theorem:cov}.  This provides a theoretical explanation of the phenomena illustrated in Fig. \ref{fig:all_datasets_zero_reg} (right).
    \item[c)] At finite regularization and under the same homogeneity assumption, we show that the asymptotic training loss depends solely on the {\it data covariance matrix}, such that it is, again, the same loss as the one of a single Gaussian cloud with matching covariance, Theorem \ref{thm:Gauss_universality}. This is illustrated in Fig. \ref{fig:all_datasets_finite_reg}.
\end{itemize}

The proof technique used to establish these universality theorems has an interest on its own. It builds on recent progress in high-dimensional statistics and in mathematical insights drawn from the replica method in statistical physics. In particular, we provide an {\it explicit} matching of the expression (obtained from a rigorous proof of the replica prediction) for the asymptotic minimal loss \cite{thrampoulidis2018precise, montanari2019generalization, loureiro2021learning, loureiro2021learning_gm}. We further demonstrate the strong universality for ridge regression with vanishing regularization, again by showing explicitly that the exact solution \cite{dobriban2018high,hastie2019surprises,loureiro2021learning} reduces to one of the homogeneous Gaussian cases. These results are obtained through methods that have been developed from statistical physics and mathematical physics-inspired techniques.

\subsection*{Further Related work}

\paragraph*{The perceptron ---}
The question of how many samples can be perfectly fitted by a linear model is a classical one. For a ridge classifier, it amounts to asking whether a linear system of $n$ equations with $p$ unknowns is invertible so that for full-rank data the transition arises at $n=p$. For the 0/1 loss or its convex surrogate such as the hinge loss, the question of linear separability was famously discussed by \cite{cover1965geometrical} who showed that for full-rank data the transition is given by $n=2p$. In both cases, the transition is universal and does not depend on details of the data distribution (provided it is full rank, otherwise the rank replaces the dimension). For Gaussian data, such questions have witnessed a large amount of attention in the statistical physics community
~\citep{gardner1988optimal,gardner1989three,krauth1989storage,derrida1991finite,brunel1992information,franz2017universality} but also recently in theoretical computer science~\citep{ding2019capacity,aubin2019storage,abbaras2020rademacher,montanari2021tractability,alaoui2020algorithmic}. It is one of our goals to attract attention to these works, given the Gaussian universality we present shows that their relevance is not limited to idealistic Gaussian data.\looseness=-1

\paragraph*{Random Labels ---} Random labels are a fundamental and useful concept in machine learning. The pioneering work of \citep{zhang2021understanding}, for instance, was instrumental in the modern critics of classical measures of model complexity, including the Rademacher complexity or the VC-dimension. These considerations have driven an entire line of research aiming to find substantial differences between learning with true and random labels, for instance in training time \citep{arpit2017closer,han2018co,zhang2018generalized}, in minima sharpness \citep{keskar2016large,neyshabur2017exploring} or in what neural networks can actually learn with random labels \citep{maennel2020neural}. It has also been recently claimed \citep{maennel2020neural} that pre-training on random labels under a given initial condition scaling can consistently speed up neural network training on both true and random labels, with respect to training from scratch. 

\paragraph*{Gaussian Universality ---}
There has been much progress on a similar, though more restricted, Gaussian universality for random feature maps on Gaussian input data \cite{rahimi2007random}. Following  early insights by \cite{el2010spectrum}, the authors of \cite{pennington2017nonlinear, mei2019generalization} showed that the empirical distribution of the Gram matrix of random features is asymptotically equivalent to a linear model with matched covariance. This  was extended to generic convex losses by \cite{gerace2020generalisation} using the heuristic replica method, and proven in \cite{dhifallah2021phase}. A specific \emph{Gaussian Equivalence Principle} \cite{goldt2019modelling} for learning with random features has been proven in a succession of works for convex penalties in \cite{goldt2020gaussian,hu_universality_2021} and some non-convex ones in \cite{montanari2022universality}. Early ideas on Gaussian universality have also appeared in the context of signal processing and compressed sensing in \cite{doi:10.1098/rsta.2009.0152, 8006947, 5730603, NEURIPS2019_dffbb6ef, NIPS2017_136f9513}. These theoretical results, however, fall short when considering realistic datasets as we do in this work. Indeed, these previous works considered only uni-modal Gaussian data (observed through random feature maps), a situation far from realistic multi-modal, complex, real-world datasets.  Instead, \cite{seddik_2020_random,louart2018random,futureuniversality} argued that real datasets can be efficiently approximated in high dimensions by a finite {\it mixture of Gaussians}. These, of course, are multi-modal distributions that cannot be approximated by a single Gaussian. Gaussian mixtures will be the starting point of our theory.

Finally, we note that the observation that Gaussian data can fit or represent well some real data has been heuristically observed in many situations, but without theoretical justification and often limited to ridge regression, see e.g.  \cite{bordelon2020spectrum,li2021statistical,loureiro2021learning,cui2021generalization,Ingrosso2022}. 

\section{Setting, notation, and Asymptotic formulas}
\label{section3}
The focus of the present work is the analysis of high-dimensional binary linear classification (aka perceptron) on a dataset $\mathcal{D} = \left \{ \left(\bm{x}_{\mu}, y_{\mu}\right) \right \}_{\mu = 1}^n$. We shall consider a minimization problem of the form
\begin{equation}
\label{eq:def_min_problem_main}
 \widehat\cR_n^*(\bm X, \bm y) = \inf_{\bm \theta} \frac1n\sum_{\mu=1}^n \ell(\bm \theta^\top \bm x_\mu, y_\mu) + 
 \frac{\lambda}{2} \vert\vert \bm{\bm \theta} \vert\vert_2^2 ,
\end{equation}
where the $\bm x_\mu \in \dR^p$ are input vectors, $y_\mu\in \{-1,+1\}$ are binary labels.
We assume that the loss $\ell$ only depends on the inputs $\bm{x}_\mu$ through a one-dimensional projection $\bm \theta^\top \bm x_\mu$, and we work in the so-called \emph{thermodynamic} or \emph{proportional high-dimensional limit}, where $n, p$ go to infinity with
\[\frac np \to \alpha > 0. \]

In practice, practitioners seldom use the raw data {\bf x} directly in their linear classifiers and usually perform a preprocessing step. For instance, instead of using the raw MNIST, a classical approach is to use a fixed feature map, and to observe the data as ${\bf x}=\sigma(F {\bf x})$, with $F$ a random matrix. This is called the random feature map \cite{rahimi2007random}, and it has the advantage, among others, that the effective data ${\bf x}$ are full-rank. One may use more complicated such as the convolutional scattering transform \cite{brunel1992information,bruna2013invariant}, or even pre-trained neural networks, a situation called transfer learning \cite{torrey2010transfer,gerace2022probing}. We shall as well applied such transforms to our real data, in order to avoid theoretical pitfalls in direct space (in images some pixels are always zero for instance, so that the data may not even be full-rank). There is also one more advantage of using fixed features: it corresponds to deep learning (with actual multi-layer nets) in the so-called lazy regime \cite{jacot2018neural, chizat2018lazy}. In this case, the feature matrix is a random matrix. Therefore, our results go beyond linear models and are also relevant to deep learning in the lazy regime. In our numerical experiments, we shall thus not only work with the original data (see appendix \ref{numerical_simulations}, and in particular Fig.\ref{fig:all_datasets_finite_reg_direct}
), but also ---and mainly--- with random features maps and fixed features maps (as in Fig.\ref{fig:all_datasets_zero_reg} and Fig.\ref{fig:all_datasets_finite_reg})

For the labels, we shall focus in this work on the \emph{random label} model, where the $y_\mu$ are independent of the inputs $\bm{x}_\mu$, and generated independently according to a Rademacher distribution:
\begin{equation}
y_{\mu} \sim \frac{1}{2}\left( \delta_{-1} + \delta_{+1}\right).
\label{eq:y}
\end{equation}

In our theoretical approach, we shall use mainly two data models: 
\begin{itemize}[wide = 1pt]
\item The simplest one is the {\bf Gaussian covariate model} (GCM), where the inputs $\bm x_\mu\in\mathbb{R}^{p}$ are independently drawn from a Gaussian distribution:
\begin{equation}
\label{eq:def:gcm}
    {\bm x}_\mu \sim {\cal N}(\bm 0,\bm\Sigma).
\end{equation}
The Gaussian covariate model has been the subject of much attention recently ~\citep{dobriban2018high,thrampoulidis2018precise, hastie2019surprises,mei2019generalization,wu2020optimal,Bartlett30063,jacot2020kernel,celentano2020lasso,aubin2020generalization, loureiro2021learning, loureiro2022fluc}. In particular, the asymptotic statistics of the minimizer of eq.~\eqref{eq:def_min_problem_main} for different models for the labels can be computed using the replica method, and rigorously proven as well. In particular, the random label limit relevant to our discussion can be obtained as a limit of the expressions derived using the replica method of statistical physics and mathematically proven in \cite{loureiro2021learning}. We shall use here the following expressions (see also Appendix \ref{sec:app:gcm})

\begin{theorem}[Asymptotics of the GCM for random labels,  adapted from \cite{loureiro2021learning}, informal]
\label{theorem:GM}
 Consider the minimization problem in eq.~\eqref{eq:def_min_problem_main}, with the inputs $\bm{x}_\mu$ generated according to a Gaussian covariate model. Assume that the loss $\ell$ is strictly convex (or that $\ell$ is convex and $\lambda > 0$). Under mild regularity conditions on the $\bm \mu$, $\Sigma$, as well as the loss and regularizer, 
 then we have the asymptotic training performance of the empirical risk minimizer eq.~\eqref{eq:app:erm} for the random label Gaussian mixture model satisfying the scalings (\ref{assump:scalings}) in the proportional high-dimensional limit as $n \to \infty$:
\begin{equation}
\begin{split}
\widehat \cR_n^*(\bm X, y(\bm X)) &\overset{\dP}{\longrightarrow}   \mathcal{E}^{\text{gcm}}_{\ell}(\alpha, \lambda) \coloneqq  \frac{1}{2}\sum\limits_{y\in\{-1,+1\}}\mathbb{E}_{\xi\sim\mathcal{N}(0,1)}\left[\ell(\prox_{V^{\star}\ell(\cdot, y)}\left(\sqrt{q^{\star}}\xi\right), y)\right]
    \end{split}
\end{equation}
where $\prox_{\tau f(\cdot)}$ is the proximal operator associated with the loss:
\begin{equation}
    \prox_{\tau \ell(\cdot, y)}(x) \coloneqq  \underset{z\in\mathcal{C}_{1}}{\argmin}\left[\frac{1}{2\tau}(z-x)^2+\ell(z,y)\right]
\end{equation} and the parameters $(V^{\star}, q^{\star})$ are the (unique) fixed-point of the following self-consistent equations:
\begin{align}
&\begin{cases}
\hat{V} = \frac{\alpha}{2} \sum\limits_{y\in\{-1,+1\}}\mathbb{E}_{\xi\sim\mathcal{N}(0,1)}\left[\partial_{\omega}f_{\ell}(y, \sqrt{q}\xi, V)\right]\\
\hat{q} = \frac{\alpha}{2} \sum\limits_{y\in\{-1,+1\}}\mathbb{E}_{\xi\sim\mathcal{N}(0,1)}\left[f_{\ell}(y, \sqrt{q}\xi, V)^2\right]\\
\end{cases}, &&
\begin{cases}
	V &= \frac{1}{p}\tr\Sigma\left(\lambda\bm{I}_{p}+\hat{V}\Sigma\right)^{-1}\\
q &= \frac{1}{p}\hat{q}\tr\Sigma^2\left(\lambda\bm{I}_{p}+\hat{V}\Sigma\right)^{-2}\\
\end{cases}
\end{align}	
where $f_{g}(y,\omega,V) \coloneqq V^{-1}\left(\prox_{V\ell(\cdot, y)}(\omega)-\omega\right)$.
\end{theorem}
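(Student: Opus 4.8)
The plan is to derive this statement as a specialization of the general Gaussian covariate model (GCM) asymptotics established rigorously in \cite{loureiro2021learning}, rather than proving it from scratch. That general result characterizes the limiting training loss through a set of scalar order parameters $(m, q, V)$ --- respectively the overlap with a teacher direction, the self-overlap (squared norm) of the estimator, and a response/variance function --- together with their conjugate duals $(\hat m, \hat q, \hat V)$, all determined by a convex--concave scalar optimization obtained from the Convex Gaussian Min-Max Theorem (CGMT). The engine underneath is the decomposition $\bm X = \bm Z\bm\Sigma^{1/2}$ with $\bm Z$ having i.i.d. standard Gaussian entries, which lets one apply Gordon's Gaussian comparison inequality to reduce the high-dimensional ERM \eqref{eq:def_min_problem_main} to this finite-dimensional saddle point.

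First I would instantiate the label channel of the general theorem with the random-label model $y_\mu \sim \frac12(\delta_{-1}+\delta_{+1})$, independent of $\bm x_\mu$. The crucial observation is that, because the labels carry no information about the inputs, the overlap order parameter $m$ (and its conjugate $\hat m$) must vanish identically at the fixed point: any nonzero alignment between $\hat{\bm\theta}$ and a would-be teacher direction would contradict the independence of $y_\mu$ and $\bm x_\mu$. This collapse $m \to 0$ is the key simplification --- it decouples the channel (label) contribution from the prior (regularizer) contribution and eliminates the teacher from the effective local field.

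Second, with $m = \hat m = 0$ the effective scalar channel seen by the minimizer reduces to a centered Gaussian local field $\sqrt{q}\,\xi$ with $\xi\sim\mathcal N(0,1)$, and the label is averaged independently as $\frac12\sum_{y\in\{-1,+1\}}$. Substituting $m = \hat m = 0$ into the general self-consistent equations, the channel update $f_\ell(y,\omega,V) = V^{-1}(\prox_{V\ell(\cdot,y)}(\omega)-\omega)$ is evaluated at $\omega = \sqrt q\,\xi$, producing exactly the stated equations for $(\hat V, \hat q)$, while the prior side, depending only on $\bm\Sigma$ and $\lambda$, yields $V = \frac1p\tr\Sigma(\lambda\bm I + \hat V\Sigma)^{-1}$ and $q = \frac1p \hat q\,\tr\Sigma^2(\lambda\bm I + \hat V\Sigma)^{-2}$. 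Evaluating the limiting loss functional at this fixed point collapses the teacher-dependent term and leaves $\frac12\sum_{y}\mathbb E_\xi[\ell(\prox_{V^\star\ell(\cdot,y)}(\sqrt{q^\star}\xi),y)]$, the claimed limit. Existence and uniqueness of the fixed point then follow from strict convexity of $\ell$ (or convexity with $\lambda > 0$), which makes the Gordon auxiliary objective strictly convex--concave with a unique saddle point; convergence in probability comes from the uniform control of the primary and auxiliary objectives in the standard CGMT argument.

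The main obstacle will be rigorously justifying the $m \to 0$ collapse and the exchange of limits it entails: one must check that the random-label model is a genuine limiting case compatible with the regularity hypotheses of \cite{loureiro2021learning} --- the pseudo-Lipschitz test functions, the scalings~\eqref{assump:scalings}, and the convexity requirements --- rather than a degenerate boundary case where those hypotheses silently fail. This is delicate precisely because, with random labels, the interpolation threshold marks where the unregularized problem becomes unbounded, so the convexity-plus-regularization assumption is exactly what guarantees a finite, unique minimizer and the validity of the reduction up to that point.
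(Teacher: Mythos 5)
Your proposal is correct, but it reaches Theorem \ref{theorem:GM} by a genuinely different route than the paper's own derivation. You specialize the teacher--student GCM asymptotics of \cite{loureiro2021learning}: you introduce a (dummy) teacher, take the label channel to be independent of the teacher preactivation, and argue that the teacher-overlap order parameters collapse, $m = \hat m = 0$, after which the saddle-point equations and the limiting loss reduce to the stated ones. The paper instead (Appendix \ref{sec:app:gcm}) never invokes a teacher at all: it observes that the random-label GCM is an \emph{exact} special case of the random-label GMM of Theorem \ref{thm:app:asymp_error} (adapted from \cite{loureiro2021learning_gm}) with $K=1$ cluster and $\bm{\mu}_{1} = \bm{0}_{p}$, so the zero-mean equations \eqref{eq:app:sp:gmm_zeromean} give $m=\hat m=0$ trivially and the stated fixed point follows at once. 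Your route matches the attribution in the theorem's title and supplies the mechanism by which the teacher disappears --- at the candidate point $m=0$ the equation for $\hat m$ is a correlation of $f_\ell$ with the centered teacher field, which factorizes and vanishes once $y$ is independent of it, and uniqueness of the fixed point under (strict) convexity then pins this down, exactly as you say. The paper's route is shorter and dissolves what you flag as the ``main obstacle'': since the random-label channel is an admissible instance (not a limit) of the label channels covered by \cite{loureiro2021learning}, and since $K=1$, $\bm{\mu}=\bm{0}$ is an exact parameter choice in the GMM formulation, no exchange of limits is required and no boundary degeneracy arises; one small correction to your closing remark is that below the interpolation threshold the unregularized training loss stays bounded (it is zero) --- what diverges is the norm of the minimizer for margin losses --- and in any case the delicate $\lambda \to 0^{+}$ analysis is deferred in the paper to Theorems \ref{theorem:cov} and \ref{thm:ols}, not folded into Theorem \ref{theorem:GM}, which assumes strict convexity or $\lambda>0$ precisely to guarantee a unique, finite minimizer.
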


\item A more generic model of data, which has the advantage of being multi-modal to be fit complex situations, is the {\bf Gaussians Mixture Model} (GMM). In this case, the inputs $\bm{x}_{\mu}\in\mathbb{R}^{p}$ are independently generated as:
\begin{equation}
    {\bm x}_\mu \sim \sum_{c\in\mathcal{C}} \rho_{c}\, {\cal N}
    (\bm{\mu}_{c},\bm{\Sigma}_{c}) 
    \label{GMM-model}
\end{equation}
\noindent where $\mathcal{C} \coloneqq \{1,\cdots, K\}$ indexes the $K$ Gaussian clouds and $\rho_{c} \in[0,1]$ is the density of points in each cloud and satisfies $\sum_{c\in\mathcal{C}}\rho_{c} = 1$. The analysis of Gaussian mixture models in the high-dimensional regime has been the subject of many works. The exact asymptotic expression for the minimum training loss has been derived for a range of particular cases in, between others, \cite{kini2021phase,sifaou2019phase,mai2019large,mignacco2020role,taheri2020optimality,dobriban2020provable} and in full generality for arbitrary means and covariances in \cite{loureiro2021learning_gm}.  We shall thus use the random label limit of their expression in the binary classification case:

\begin{theorem}[Asymptotics of the GMM for random labels, adapted from \cite{loureiro2021learning_gm}, informal]\label{thm:app:asymp_error}    \label{propGMM}

 Consider the minimization problem in eq.~\eqref{eq:def_min_problem_main}, with the inputs $\bm{x}_\mu$ generated according to a Gaussian mixture as in \eqref{GMM-model}. Assume that the loss $\ell$ is strictly convex (or that $\ell$ is convex and $\lambda > 0$). Under mild regularity conditions on the $\bm \mu_c$, $\bm \Sigma_c$, as well as the loss and regularizer, we have the training performance of the empirical risk minimizer eq.~\eqref{eq:app:erm} for the random label Gaussian mixture model satisfying the scalings (\ref{assump:scalings}) are given by:
\begin{equation}\label{eq:app:asymptotic_risk}
\begin{split}
\widehat \cR_n^*(\bm X, y(\bm X)) &\overset{\dP}{\longrightarrow}  \mathcal{E}^{\text{gmm}}_{\ell}(\alpha, \lambda, K) \coloneqq  \frac{1}{2}\sum\limits_{c\in\mathcal{C}}\rho_{c}\sum\limits_{y\in\{-1,+1\}}\mathbb{E}_{\xi\sim\mathcal{N}(0,1)}\left[\ell(\prox_{V^{\star}_{c}\ell(\cdot, y)}\left(m_c^\star + \sqrt{q^{\star}_{c}}\xi\right), y)\right] 
    \end{split}
\end{equation}
\noindent where $\ell$ is the loss function used in the empirical risk minimization in eq.~\eqref{eq:app:erm}, $\prox_{\tau f(\cdot)}$ is the proximal operator associated with the loss:
\begin{equation}
    \prox_{\tau \ell(\cdot, y)}(x) \coloneqq  \underset{z\in\mathcal{C}_{1}}{\argmin}\left[\frac{1}{2\tau}(z-x)^2+\ell(z,y)\right]
\end{equation}
\noindent and $(m_c^\star, V_{c}^{\star},q^{\star}_{c})_{c\in\mathcal{C}}$ are the \textbf{unique} fixed points of the following self-consistent equations:
\begin{equation}
\begin{split}
\label{eq:app:sp:gmm}
&\begin{cases}
\hat{V}_{c} = \frac{\alpha}{2} \rho_{c}\sum\limits_{y\in\{-1,+1\}}\mathbb{E}_{\xi\sim\mathcal{N}(0,1)}\left[\partial_{\omega}f_{\ell}(y, m_{c}+\sqrt{q_{c}}\xi, V_{c})\right]\\
\hat{q}_{c} = \frac{\alpha}{2} p_{c}\sum\limits_{y\in\{-1,+1\}}\mathbb{E}_{\xi\sim\mathcal{N}(0,1)}\left[f_{\ell}(y, m_{c}+\sqrt{q_{c}}\xi, V_{c})^2\right]\\
\hat{m}_{c} = \frac{\alpha}{2} p_{c}\sum\limits_{y\in\{-1,+1\}}\mathbb{E}_{\xi\sim\mathcal{N}(0,1)}\left[f_{\ell}(y, m_{c}+\sqrt{q_{c}}\xi, V_{c})\right]
\end{cases}\\
&
\begin{cases}
	V_{c} &= \frac{1}{p}\tr\Sigma_{c}\left(\lambda\bm{I}_{p}+\sum\limits_{c'\in\mathcal{C}}\hat{V}_{c'}\Sigma_{c'}\right)^{-1}\\
q_{c} &= \frac{1}{p}\sum\limits_{c'\in\mathcal{C}}\left[\tr\left(\hat{q}_{c'}\Sigma_{c'}+\hat{m}_{c}\hat{m}_{c'}\bm{\mu}_{c'}\bm{\mu}_{c}^{\top}\right)\Sigma_{c}\left(\lambda\bm{I}_{p}+\sum\limits_{c''\in\mathcal{C}}\hat{V}_{c''}\Sigma_{c''}\right)^{-2}\right]\\
m_{c} &= \frac{1}{p}\sum\limits_{c'\in\mathcal{C}}\hat{m}_{c}\hat{m}_{c'}\left[\tr\bm{\mu}_{c'}\bm{\mu}_{c}^{\top}\left(\lambda\bm{I}_{p}+\sum\limits_{c''\in\mathcal{C}}\hat{V}_{c''}\Sigma_{c''}\right)^{-1}\right]
\end{cases}
\end{split}
\end{equation}	
\noindent where $f_{\ell}(y,\omega,V) \coloneqq V^{-1}\left(\prox_{V\ell(\cdot, y)}(\omega)-\omega\right)$. 
\end{theorem}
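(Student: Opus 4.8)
The plan is to prove the convergence and characterize the limit through the \emph{Convex Gaussian Min-max Theorem} (CGMT), which is the rigorous counterpart of the replica derivation that produces the fixed-point system \eqref{eq:app:sp:gmm}; the random-label structure then enters only at the very end as a simplification. First I would recast the empirical risk minimization \eqref{eq:def_min_problem_main} in min-max form. Writing each sample in cloud $c$ as $\bm x_\mu = \bm\mu_c + \bm\Sigma_c^{1/2}\bm z_\mu$ with $\bm z_\mu\sim\mathcal N(\bm 0,\bm I_p)$, grouping the $n_c\simeq\rho_c n$ standard Gaussian rows of cloud $c$ into a matrix $\bm Z_c$, and using the Fenchel dual $\ell(h,y)=\sup_u\{uh-\ell^*(u,y)\}$, the objective becomes
\[
\min_{\bm\theta}\max_{\bm u}\ \frac1n\sum_{c\in\cC}\bm u_c^\top \bm Z_c\bm\Sigma_c^{1/2}\bm\theta + \frac1n\sum_{\mu=1}^n u_\mu\,\bm\mu_{c_\mu}^\top\bm\theta - \frac1n\sum_{\mu=1}^n\ell^*(u_\mu,y_\mu)+\frac\lambda2\norm{\bm\theta}_2^2 .
\]
All the randomness in the bilinear coupling now sits in the independent Gaussian matrices $\{\bm Z_c\}_{c\in\cC}$, exactly the structure CGMT handles, while the means contribute only the deterministic shifts $\bm\mu_{c_\mu}^\top\bm\theta$.

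Next I would apply CGMT jointly to the $K$ independent blocks, replacing each form $\bm u_c^\top\bm Z_c\bm\Sigma_c^{1/2}\bm\theta$ by its auxiliary surrogate $\norm{\bm\Sigma_c^{1/2}\bm\theta}_2\,\bm g_c^\top\bm u_c+\norm{\bm u_c}_2\,\bm h_c^\top\bm\Sigma_c^{1/2}\bm\theta$ with fresh Gaussian vectors $\bm g_c,\bm h_c$. The key step is then to introduce the per-cloud scalar order parameters $m_c\propto\tfrac1p\bm\mu_c^\top\bm\theta$, $q_c\propto\tfrac1p\bm\theta^\top\bm\Sigma_c\bm\theta$, and their duals, so that the high-dimensional min-max collapses to a finite-dimensional scalar problem. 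Optimizing over $\bm\theta$ against the quadratic regularizer yields a ridge-type subproblem whose solution is controlled by the resolvent $(\lambda\bm I_p+\sum_{c'}\hat V_{c'}\bm\Sigma_{c'})^{-1}$; taking deterministic equivalents of the resulting traces reproduces the equations for $V_c,q_c,m_c$. Dually, the maximization over each $u_\mu$ decouples into a one-dimensional problem whose optimizer is precisely $\prox_{V_c\ell(\cdot,y_\mu)}$, which is what generates $f_\ell$ and the hatted equations.

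The random-label limit is the cleanest part. Because the $y_\mu$ are i.i.d.\ Rademacher and independent of the $\bm x_\mu$, within each cloud the empirical label measure converges to $\tfrac12(\delta_{-1}+\delta_{+1})$, so every average of the form $\tfrac1{n_c}\sum_{\mu\in c}F(u_\mu,y_\mu)$ concentrates to $\tfrac12\sum_{y\in\{-1,+1\}}\mathbb E[F(\cdot,y)]$. This is exactly the origin of the symmetric $\tfrac12\sum_y$ appearing throughout \eqref{eq:app:sp:gmm} and in the limiting risk $\mathcal E^{\text{gmm}}_\ell$. Uniqueness of the fixed point follows from convexity: under strict convexity of $\ell$, or convexity with $\lambda>0$, the reduced scalar objective is strictly convex-concave, so both the ERM minimizer and the solution of \eqref{eq:app:sp:gmm} are unique, which also licenses passing from the sequence of minimizers to the deterministic limit in probability.

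I expect the main obstacle to be the \emph{joint} treatment of the $K$ anisotropic Gaussian blocks. Classical CGMT is stated for a single i.i.d.\ Gaussian matrix over a compact convex feasible set; here one must (i) justify the simultaneous Gaussian comparison across blocks that are coupled only through the shared $\bm\theta$ and the regularizer, (ii) control deterministic equivalents of mixed resolvent traces such as $\tfrac1p\tr\bm\Sigma_c(\lambda\bm I_p+\sum_{c'}\hat V_{c'}\bm\Sigma_{c'})^{-1}$ for general, non-commuting covariances $\bm\Sigma_c$, which calls for operator-valued free-probability machinery, and (iii) secure the \emph{a priori} boundedness of $\bm\theta$ and the dual variables uniformly in $n$ needed to confine the problem to a compact set before invoking CGMT. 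Handling these three points simultaneously---rather than the scalar reduction itself---is where the real work lies.
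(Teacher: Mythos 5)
Your proposal should first be measured against what the paper actually does here: it does not prove this theorem internally at all. Appendix~\ref{sec:app:gmm} states explicitly that the formula is ``a straightforward adaptation'' of the general result proven in \cite{loureiro2021learning_gm} (general convex losses, arbitrary means and covariances, labels correlated with the means), specialized to binary classification with $K$ clusters and labels independent of the inputs --- that specialization is what produces the symmetric $\frac{1}{2}\sum_{y\in\{-1,+1\}}$ averages in \eqref{eq:app:asymptotic_risk} and \eqref{eq:app:sp:gmm}. Your last-paragraph observation that the random-label structure enters only as a simplification of the label averages is exactly the reduction the paper performs, and that part of your proposal is sound. However, the proof in the cited source proceeds via rigorous state evolution of a matrix-valued approximate message passing (AMP) iteration, together with an argument identifying the ERM estimator with the AMP fixed point --- not via the Convex Gaussian Min-max Theorem. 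So your route is genuinely different both from the paper (which cites) and from the paper's source (which uses AMP).

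As a from-scratch argument, your CGMT plan is plausible in outline, but the step you yourself defer --- the joint Gaussian comparison over $K$ anisotropic blocks $\bm u_c^\top \bm Z_c \bm\Sigma_c^{1/2}\bm\theta$ coupled through a shared $\bm\theta$ --- is precisely where the proof lives, and it is plausibly the reason \cite{loureiro2021learning_gm} chose AMP for the general case. Block-CGMT arguments exist in the literature for special cases (two clusters, isotropic or shared covariances, e.g.\ \cite{mignacco2020role,kini2021phase}), but verifying Gordon's covariance conditions for the summed $K$-block process with generic non-commuting $\bm\Sigma_c$, and then carrying out the scalarization with per-cloud order parameters $(m_c, q_c, V_c)$, is not supplied in your sketch; as written this is a genuine gap rather than a routine extension. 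Two smaller corrections. First, your point~(ii) overstates the difficulty: once the dual scalars $\hat V_{c}$ are fixed, the resolvent $\left(\lambda\bm{I}_{p}+\sum_{c'}\hat{V}_{c'}\bm\Sigma_{c'}\right)^{-1}$ in \eqref{eq:app:sp:gmm} is a \emph{deterministic} matrix, so the traces $\frac{1}{p}\tr\bm\Sigma_{c}(\cdot)^{-1}$ require no operator-valued free probability --- the random objects are quadratic forms in the auxiliary Gaussians $\bm h_c$, which concentrate by Hanson--Wright-type bounds. Second, uniqueness of the fixed point of \eqref{eq:app:sp:gmm} does not follow merely from strict convex-concavity of the reduced scalar objective (convexity in $\bm\theta$ does not automatically transfer to joint strict convexity-concavity in the order parameters); in the paper this uniqueness is simply inherited from the statement proven in \cite{loureiro2021learning_gm}. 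Likewise your point~(iii) on a priori compactness is real and is handled in the source via coercivity from $\lambda>0$ (or strict convexity), which is exactly the hypothesis under which the theorem is stated.
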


\end{itemize}

\section{The main theoretical results: from mixtures to a single Gaussian} 
In this section, we present the main theoretical results of the present work and discuss their consequences: We show that with random labels, GMM models can be  reduced to a single GCM model. This provides an explanation of the universality observed in Figs.~\ref{fig:all_datasets_zero_reg} and \ref{fig:all_datasets_finite_reg}.

We would like the reader to note that we state our results using  theorems because indeed we were able to establish them mathematically rigorously. However, the proofs are deferred to the appendices and the reasoning and derivations presented in this section follow the level of rigour common in theoretical physics. We made this choice to ensure readability to both physics and mathematics-oriented audiences. 

The starting point is the Gaussian Mixture Model (GMM). This is a very generic model of data and standard approximation results (e.g. the Stone-Weierstrass theorem) show in particular that one can approximate data density to arbitrary precision by Gaussian mixtures. While, in the worst case this would require a diverging number of Gaussian in the mixture, it can be shown that (as far as the generalized linear model is concerned) a mixture of a small number of Gaussians is actually able to approximate very complex data set in high-dimension \cite{seddik_2020_random,louart2018random,seddik_2021_unexpected}. More precisely, in the proportional high-dimensional regime, data generated by Generative Adversarial Networks (GAN), one of the state of the art techniques to generate realistic looking data, 
behave as Gaussian mixtures for such classifiers \cite{futureuniversality}. We shall thus use this model as our benchmark of ``complex'' data distribution.

If a mixture model is a good approximation of reality in high-dimension, the question remains: {\it why is it that we can fit real dataset with a single Gaussian}. Our main technical question will thus be: if we use random labels, what is the difference between a GMM and a single Gaussian model? 

\subsection{Mean invariance with random labels}
We thus now move to the random label case and show how we can surprisingly use a simple Gaussian distribution instead of the GMM. We are going to use theorem \ref{theorem:GM} and \ref{thm:app:asymp_error}. Note that the asymptotic value of the energy, or loss, only depends on the probability vector $\bm \rho\in[0,1]^{K}$ (with entries $\rho_{c}$ corresponding to the respective sizes of the $K$ clusters), the matrix of averages $\bm M\in\mathbb{R}^{K\times p}$ (with rows $\bm{\mu_{c}}\in\mathbb{R}^{p}$), and the concatenation of covariances $\bm \Sigma^{\otimes}\in\mathbb{R}^{K\times p\times p}$ (with rows $\bm \Sigma_{c}\in\mathbb{R}^{p\times p}$) and therefore we denote:
\[ \mathcal{E}_{\ell} = \mathcal{E}_{\ell}^{\text{gmm}}(\bm \rho, \bm M, \bm \Sigma^\otimes). \]
Similarly, for the Gaussian covariate model we define the limiting value
\[ \mathcal{E}_{\ell} = \mathcal{E}_{\ell}^{\text{gcm}}(\bm m, \bm \Sigma). \]
where in both cases we omitted the explicit dependence on the parameters $(\alpha, \lambda)$. We are now in a position to state a lemma crucial to our first main universality result:
\begin{lemma}[Single mean lemma for random labels]\label{lemma:mean_universality}
    In the random label setting \eqref{eq:y}, assume that the loss $\ell$ is symmetric, in the sense that $\ell(x, y) = \ell(-x, -y)$ for $x, y \in \dR$. Then, the limiting value $\mathcal{E}_{\ell}$ of the risk is independent from the means, i.e.  for all choices of $\bm\rho$, $\bm M$ and $\bm\Sigma^\otimes$ we have
    \[ \mathcal{E}_{\ell}^{\text{gmm}}(\bm \rho, \bm M, \bm \Sigma^\otimes) = \mathcal{E}_{\ell}^{\text{gmm}}(\bm \rho, \bm 0, \bm \Sigma^\otimes)\,. \]
\end{lemma}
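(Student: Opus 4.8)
The plan is to work directly with the self-consistent equations of Theorem \ref{thm:app:asymp_error} and to show that, when the loss is symmetric, the fixed point automatically has vanishing overlap parameters $m_c^\star$ and $\hat m_c^\star$. The starting observation is that symmetry of the loss propagates to the proximal operator: if $\ell(x,y) = \ell(-x,-y)$ and the constraint set $\mathcal{C}_1$ is symmetric about the origin, then substituting $z \mapsto -z$ in the definition of the prox gives
\[ \prox_{\tau\ell(\cdot,-y)}(-x) = -\prox_{\tau\ell(\cdot,y)}(x), \]
and consequently the channel function $f_\ell(y,\omega,V) = V^{-1}(\prox_{V\ell(\cdot,y)}(\omega) - \omega)$ is jointly odd, $f_\ell(-y,-\omega,V) = -f_\ell(y,\omega,V)$.

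Next I would exhibit $(m_c, \hat m_c) = (0,0)$ for all $c$ as a consistent solution. Setting $m_c = 0$, the equation for $\hat m_c$ reads $\hat m_c = \tfrac{\alpha}{2}\rho_c\sum_{y}\mathbb{E}_\xi[f_\ell(y,\sqrt{q_c}\xi,V_c)]$. Pairing the $y = +1$ and $y=-1$ terms and using the oddness of $f_\ell$ together with the fact that $-\xi$ has the same law as $\xi$,
\[ \mathbb{E}_\xi[f_\ell(-1,\sqrt{q_c}\xi,V_c)] = -\mathbb{E}_\xi[f_\ell(1,-\sqrt{q_c}\xi,V_c)] = -\mathbb{E}_\xi[f_\ell(1,\sqrt{q_c}\xi,V_c)], \]
so the two terms cancel and $\hat m_c = 0$. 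With $\hat m_c = 0$ the cross terms $\hat m_c\hat m_{c'}\bm\mu_{c'}\bm\mu_c^\top$ drop out of the equations for $q_c$ and $m_c$; the latter then returns $m_c = 0$, confirming the ansatz, while the remaining equations for $(V_c, q_c, \hat V_c, \hat q_c)$ close among themselves and coincide exactly with those of the zero-mean problem $\mathcal{E}_\ell^{\text{gmm}}(\bm\rho, \bm 0, \bm\Sigma^\otimes)$.

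To conclude, I would invoke the uniqueness of the fixed point asserted in Theorem \ref{thm:app:asymp_error}: since the candidate above satisfies every self-consistent equation, it must be \emph{the} fixed point for any choice of means $\bm M$, and its covariance-type parameters agree with those of the zero-mean problem. Substituting $m_c^\star = 0$ into the risk formula gives
\[ \mathcal{E}_\ell^{\text{gmm}}(\bm\rho,\bm M,\bm\Sigma^\otimes) = \tfrac12\sum_{c\in\mathcal{C}}\rho_c\sum_y\mathbb{E}_\xi[\ell(\prox_{V_c^\star\ell(\cdot,y)}(\sqrt{q_c^\star}\xi),y)] = \mathcal{E}_\ell^{\text{gmm}}(\bm\rho,\bm 0,\bm\Sigma^\otimes), \]
which is the claim.

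The main obstacle I anticipate is not the algebra but making the argument airtight at two points. First, one must rule out any other fixed point with $m_c^\star\neq 0$: the cancellation above only shows that the zero-mean ansatz \emph{is} a fixed point, so the reduction genuinely relies on the uniqueness hypothesis of Theorem \ref{thm:app:asymp_error} being in force (guaranteed by strict convexity, or convexity with $\lambda>0$). Second, the innocuous-looking requirement that $\mathcal{C}_1$ be symmetric about the origin is what lets the prox symmetry go through, and it must be checked (it holds in the unconstrained case $\mathcal{C}_1 = \dR$ relevant here). Everything else is bookkeeping within the fixed-point system.
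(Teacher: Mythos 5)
Your proposal is correct and takes essentially the same route as the paper's proof in Appendix \ref{proof2}: both show that the joint oddness $f_\ell(-y,-\omega,V) = -f_\ell(y,\omega,V)$, derived from the symmetry of $\ell$ through the proximal operator, makes $(m_c,\hat m_c)=(0,0)$ consistent with the full saddle-point equations, and then conclude by the uniqueness of the fixed point asserted in Theorem \ref{thm:app:asymp_error}. The two caveats you flag --- symmetry of the constraint set $\mathcal{C}_1$ in the prox substitution and the reliance on uniqueness --- are indeed used (the former only implicitly) in the paper's argument, so making them explicit is a minor improvement rather than a deviation.
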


The symmetry condition on the loss is not really restrictive and is satisfied by virtually all losses used in binary classification (in particular margin-based losses of the form $\ell(x, y) = \phi(xy)$). Since a mixture of Gaussians with equal means and covariances is equivalent to a single Gaussian, we can now write the following theorem:
\begin{theorem}[Gaussian universality for random labels]
\label{thm:Gauss_universality}
Consider the same assumptions as in Lemma \ref{lemma:mean_universality}, and assume further that the data is homogeneous, i.e.
\[ \bm\Sigma_{c} = \bm \Sigma \quad \text{for all} \quad c\in\mathcal{C}. \]
Then the asymptotic risk is equivalent to that of a single centered Gaussian:
\[ \mathcal{E}_{\ell}^{\text{gmm}}(\bm \rho, \bm M, \bm \Sigma^\otimes) = \mathcal{E}_{\ell}^{\text{gcm}}(\bm 0, \bm \Sigma). \]
\end{theorem}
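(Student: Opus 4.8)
The plan is to prove the theorem in two stages: first invoke the mean-invariance lemma to remove the means, and then show that a centered, homogeneous mixture collapses onto the single-Gaussian fixed point. Since both sides of the claimed identity are defined purely through the replica fixed-point systems of Theorems \ref{theorem:GM} and \ref{thm:app:asymp_error}, the whole argument reduces to manipulating these self-consistent equations and exploiting their uniqueness.

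First I would apply Lemma \ref{lemma:mean_universality} directly. The symmetry hypothesis $\ell(x,y)=\ell(-x,-y)$ is exactly what the lemma requires, so for any $\bm\rho$, $\bm M$, $\bm\Sigma^\otimes$ one gets
\[ \mathcal{E}_{\ell}^{\text{gmm}}(\bm \rho, \bm M, \bm \Sigma^\otimes) = \mathcal{E}_{\ell}^{\text{gmm}}(\bm \rho, \bm 0, \bm \Sigma^\otimes). \]
It therefore suffices to establish the identity in the centered case, i.e. to show $\mathcal{E}_{\ell}^{\text{gmm}}(\bm \rho, \bm 0, \bm \Sigma^\otimes) = \mathcal{E}_{\ell}^{\text{gcm}}(\bm 0, \bm \Sigma)$ under the homogeneity assumption $\bm\Sigma_c = \bm\Sigma$ for every $c$.

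Second I would feed the centered, homogeneous data into the GMM fixed-point equations \eqref{eq:app:sp:gmm} with the symmetric ansatz $m_c^\star = 0$, $V_c^\star = V^\star$, $q_c^\star = q^\star$ for all $c$. With $\bm\mu_c = \bm 0$ the rank-one terms $\bm\mu_{c'}\bm\mu_c^\top$ vanish, which forces $m_c = 0$ and kills the $\hat m$ contribution to $q_c$; with $\bm\Sigma_c = \bm\Sigma$ the resolvent $\lambda \bm I_p + \sum_{c'}\hat V_{c'}\bm\Sigma_{c'}$ becomes $\lambda\bm I_p + (\sum_{c'}\hat V_{c'})\bm\Sigma$, so the right-hand sides defining $V_c$ and $q_c$ no longer depend on $c$, consistently with the ansatz. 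Setting $\hat V := \sum_c \hat V_c$ and $\hat q := \sum_c \hat q_c$ and summing the cluster-level equations over $c$, the weights $\rho_c$ collapse through $\sum_c \rho_c = 1$, and the resulting system for $(V^\star, q^\star, \hat V, \hat q)$ is term-by-term identical to the GCM fixed point of Theorem \ref{theorem:GM}. The uniqueness of the fixed point asserted in both theorems (under strict convexity, or convexity with $\lambda>0$) then certifies that this symmetric solution is the actual GMM solution.

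Finally, substituting $m_c^\star = 0$, $V_c^\star = V^\star$, $q_c^\star = q^\star$ into the GMM risk \eqref{eq:app:asymptotic_risk} renders the summand independent of $c$, so the outer average $\tfrac12\sum_c \rho_c$ collapses to $\tfrac12$ and reproduces exactly the GCM expression at zero mean, giving $\mathcal{E}_{\ell}^{\text{gmm}}(\bm \rho, \bm 0, \bm \Sigma^\otimes) = \mathcal{E}_{\ell}^{\text{gcm}}(\bm 0, \bm \Sigma)$; combined with the first step this is the theorem. The main obstacle is the second stage: one must verify that the symmetric ansatz is genuinely consistent across all three coupled channels ($m$, $V$, $q$ together with their hatted duals) and that every $c$-dependence cancels cleanly, and then lean on uniqueness to rule out any asymmetric fixed point. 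Distributionally a homogeneous centered mixture \emph{is} literally a single Gaussian, but making that intuition rigorous at the level of the replica equations is precisely what this collapse-plus-uniqueness argument supplies.
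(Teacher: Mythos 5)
Your proof is correct, and its first step---invoking Lemma \ref{lemma:mean_universality} to strip the means---is exactly the paper's. Where you genuinely diverge is the second step. The paper disposes of the centered homogeneous case with a one-line distributional observation: once all means vanish and $\bm\Sigma_c = \bm\Sigma$ for every $c$, the mixture law $\sum_c \rho_c\, \mathcal{N}(\bm 0, \bm\Sigma)$ \emph{is} $\mathcal{N}(\bm 0, \bm\Sigma)$, so the training data of the two models are equal in law, $\widehat{\mathcal{R}}_n^*(\bm X, \bm y)$ has the same distribution under both models for every finite $n$, and the asymptotic risks coincide trivially---no replica formula is needed for this part at all. You instead carry out the reduction at the level of the saddle-point equations \eqref{eq:app:sp:gmm}: with zero means the resolvent becomes $\lambda \bm I_p + \bigl(\sum_{c'} \hat V_{c'}\bigr)\bm\Sigma$, so the equations for $V_c$ and $q_c$ are automatically $c$-independent (your symmetric ansatz is in fact \emph{forced} by the equations, not merely consistent with them), and summing the hatted equations with $\hat V \coloneqq \sum_c \hat V_c$ and $\hat q \coloneqq \sum_c \hat q_c$ collapses the weights via $\sum_c \rho_c = 1$ onto the GCM system of Theorem \ref{theorem:GM}; uniqueness of the fixed point then identifies the solutions, and the risk functional \eqref{eq:app:asymptotic_risk} collapses likewise. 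This computation is sound, and it has the virtue of verifying the internal consistency of the two asymptotic characterizations; it is also the style of argument the paper is \emph{forced} into for Theorem \ref{thm:ols}, where the data distributions genuinely differ, the fixed points do not coincide, and only the limiting losses agree. One correction to your closing remark, which has things backwards: the distributional identity does not need to be ``made rigorous at the level of the replica equations''---it is already a complete, rigorous, and shorter proof of the second step, and it is the one the paper actually uses.
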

This is our first main universality theorem: a mixture of homogeneous Gaussians
\footnote{Also called homoskedastic Gaussians, as opposed to heteroskedastic Gaussians} 
can be replaced, when using random labels by a single Gaussian. 

This surprising fact, alone, explains the empirical observation presented in Fig.~\ref{fig:all_datasets_zero_reg} and Fig.~\ref{fig:all_datasets_finite_reg}, at least if we accept that the different modes are homogeneous (see discussion in Sec.\ref{sec:numerics}). 

\paragraph*{Proof sketch ---}
Both lemma \ref{lemma:mean_universality} and Theorem \ref{thm:Gauss_universality} stem from the detailed analysis of the replica free energy for the GMM \cite{loureiro2021learning_gm}. Indeed, to prove our claims, it suffices to show that the fixed points of the replica equations are the same. This is done in detail in appendix \ref{proof2}, using the replica equation that we provide in appendix \ref{formulas}. In a nutshell, we show that the expression of the GMM reduces to those of the GCM. \hfill \qedsymbol

%

\subsection{Generic loss with vanishing regularisation}

Additionally, we note that in Fig.~\ref{fig:all_datasets_zero_reg} at vanishing regularization, we did not even require a matching covariance, and instead used a trivial one. This is because of the following consequence of Lemma \ref{lemma:mean_universality}:\looseness=-1
\begin{theorem}[Gaussian universality for vanishing regularization]
\label{theorem:cov}
Consider the same assumptions as in theorem  \ref{thm:Gauss_universality}, then if the minimizer of $\ell$ is unique and the data covariance full-rank, then the asymptotic minimal loss for Gaussian data does not depend on the covariance when the regularization is absent, $\lambda =0$.
\end{theorem}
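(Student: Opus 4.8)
The plan is to compute $\mathcal{E}^{\text{gcm}}_{\ell}(\bm 0, \bm\Sigma)$ directly from the self-consistent equations of Theorem \ref{theorem:GM} and to show that, at $\lambda = 0$ with $\bm\Sigma$ full-rank, the covariance drops out of the entire fixed-point system. The crucial structural remark is that $\bm\Sigma$ appears \emph{only} in the two ``channel'' equations $V = \tfrac1p\tr\bm\Sigma(\lambda\bm I_p + \hat V\bm\Sigma)^{-1}$ and $q = \tfrac1p\hat q\tr\bm\Sigma^2(\lambda\bm I_p + \hat V\bm\Sigma)^{-2}$, while the conjugate equations for $\hat V, \hat q$ involve $\bm\Sigma$ only through $(V,q)$. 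First I would take $\lambda\to 0^+$ and use invertibility of $\bm\Sigma$ (full-rankness) to simplify $\bm\Sigma(\hat V\bm\Sigma)^{-1} = \hat V^{-1}\bm I_p$ and $\bm\Sigma^2(\hat V\bm\Sigma)^{-2} = \hat V^{-2}\bm I_p$, so that the traces collapse to pure dimension counting and the channel equations become $V = 1/\hat V$ and $q = \hat q/\hat V^2$, with every trace of $\bm\Sigma$ cancelled.

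Once this collapse is in place, the four equations form a closed system in $(V, q, \hat V, \hat q)$ that no longer references $\bm\Sigma$ at all. Its unique solution $(V^\star, q^\star)$ --- and hence the limiting risk $\mathcal{E}^{\text{gcm}}_{\ell} = \tfrac12\sum_{y\in\{-1,+1\}}\mathbb{E}_{\xi\sim\mathcal{N}(0,1)}[\ell(\prox_{V^\star\ell(\cdot,y)}(\sqrt{q^\star}\xi), y)]$ --- therefore coincides with the value obtained for the isotropic choice $\bm\Sigma = \bm I_p$, and in particular is independent of $\bm\Sigma$. This is exactly the asserted statement, and combined with Theorem \ref{thm:Gauss_universality} it yields the stronger conclusion advertised in the introduction: a homogeneous, full-rank Gaussian mixture at vanishing regularization produces the plain \emph{i.i.d.} Gaussian loss $\mathcal{E}^{\text{gcm}}_{\ell}(\bm 0, \bm I_p)$.

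The genuine difficulty --- and the step I expect to be the main obstacle --- is the rigorous justification of the $\lambda\to 0^+$ limit, since Theorem \ref{theorem:GM} applies for $\lambda > 0$ (or strictly convex $\ell$), whereas the collapse $\bm\Sigma(\lambda\bm I_p + \hat V\bm\Sigma)^{-1}\to\hat V^{-1}\bm I_p$ requires $\lambda$ to be negligible against $\hat V\,\lambda_{\min}(\bm\Sigma)$. Full-rankness supplies a uniform lower bound $\lambda_{\min}(\bm\Sigma)\ge c>0$, so the collapse survives even if $\hat V$ diverges (as one expects in the interpolation regime); the remaining worry is to prevent $\hat V\to 0$, equivalently $V\to\infty$, uniformly as $\lambda\to 0$. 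This is where the uniqueness of the minimizer of $\ell$ enters: it controls the boundary behaviour of the proximal operator (so that $\prox_{V\ell(\cdot,y)}\to z^\star_y$ as $V\to\infty$ is well defined) and guarantees that the $\lambda\to 0^+$ limit of the fixed point is unambiguous in both phases. I would therefore establish continuity of $(V^\star(\lambda), q^\star(\lambda))$ as $\lambda\to 0^+$, verify the spectral domination along the limiting trajectory, and conclude by the collapse argument; matching the interpolating phase, where the minimal loss reduces to the manifestly $\bm\Sigma$-independent per-sample optimum $\tfrac12\sum_{y}\ell(z^\star_y,y)$, closes the remaining gap.
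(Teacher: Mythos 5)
Your proposal is correct in substance, but it takes a genuinely different route from the paper. The paper's proof of Theorem~\ref{theorem:cov} never touches the self-consistent equations: it is a short change of variables on the finite-$n$ optimization problem itself. Writing $\bm x_\mu = \bm\Sigma^{1/2}\bm z_\mu$ with $\bm z_\mu\sim\mathcal{N}(\bm 0_p,\bm I_p)$ and substituting $\bm\theta' = \bm\Sigma^{1/2}\bm\theta$ (licit since $\bm\Sigma$ is full-rank, and the random labels are independent of the inputs), the empirical risk becomes $\frac{1}{n}\sum_\mu \ell((\bm\theta')^{\top}\bm z_\mu, y_\mu) + \frac{\lambda}{2}\Vert\bm\Sigma^{-1/2}\bm\theta'\Vert_2^2$, so the covariance survives only in the penalty and drops out as $\lambda\to 0^+$, uniqueness of the minimizer of $\ell$ making the limit unambiguous. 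This argument is exact at every finite $n,p$, requires no asymptotic formula or convexity-dependent replica result, and immediately explains the paper's follow-up remark that, with several minima, the $\lambda\to 0^+$ limit selects the minimum $\Vert\cdot\Vert_{\Sigma^{-1}}$-norm solution. Your fixed-point collapse ($V=1/\hat V$ and $q=\hat q/\hat V^2$ once $\lambda=0$ and $\bm\Sigma$ is invertible, leaving a closed $\bm\Sigma$-free system) is sound, and it is essentially the computation the paper performs in Appendix~\ref{sec:app:gcm}, though only for the square loss; note that your relation $q=\hat q/\hat V^2$ is the correct one, while the appendix displays $q=\hat q/\hat V$, a typo inconsistent with its own stated solution $q^\star=1/(\alpha-1)$. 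The cost of your route is exactly what you flag: Theorem~\ref{theorem:GM} is stated for $\lambda>0$, so you owe continuity of $(V^\star(\lambda),q^\star(\lambda))$ as $\lambda\to0^+$, and in the interpolating phase the collapse genuinely fails ($\hat V$ and $\lambda$ vanish at comparable rates, leaving spectral dependence in the traces), making your separate per-sample-optimum argument there necessary; none of this machinery is needed by the paper's reparametrization. What your approach buys in exchange is an explicit, covariance-free fixed-point system at $\lambda=0$ in the overdetermined phase, useful for quantitatively evaluating the loss, which the paper's two-line argument does not by itself supply.
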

\begin{proof}
 Consider the empirical risk minimization problem in eq.~\eqref{eq:def_min_problem_main} with data from the Gaussian covariate model eq.~\eqref{eq:def:gcm} with random labels.  Without loss of generality, we can write $\bm{x}_{\mu} = \bm \Sigma^{1/2}\bm{z}_{\mu}$, with ${\bm z}_{\mu}\sim\mathcal{N}(\bm{0}_{p},\bm{I}_{p})$. Then, making a change of variables $\bm \theta'=\bf \Sigma^{1/2} \bm \theta$, we can write: 
 \begin{align}
\widehat\cR_n^*(\bm X, \bm y) = \inf_{\bm \theta} \frac1n\sum_{\mu=1}^n \ell(\bm \theta^\top \bm x_\mu, y_\mu) +  \frac{\lambda}{2} \vert\vert \bm{\bm \theta} \vert\vert_2^2 = \inf_{\bm \theta' \in \cS'_p} \frac1n\sum_{\mu=1}^n \ell(\bm \theta'^\top \bm z_\mu, y_\mu) +  \frac{\lambda}{2} \vert\vert \Sigma^{-1/2}\bm{\bm \theta}' \vert\vert_2^2\, \notag
\end{align}
\noindent where $\cS'_p\subset\mathbb{R}^{p}$ is another compact set, and we have used the fact that $y_{\mu}$ are independent of $\bm{x}_{\mu}$. Since the minimizer of $\ell$ is unique, the result follows from taking $\lambda\to 0^{+}$.
\end{proof}
Note that in particular theorem \ref{theorem:cov} implies that for random labels, the GCM model with a covariance $\Sigma$ is equivalent to a Gaussian i.i.d. model with a different regularization given by the norm  $\vert\vert \cdot\vert\vert_{\Sigma^{-1}}$ induced by the inverse covariance matrix $\Sigma^{-1}$. Therefore, in the case in which $\ell$ has several minima, the $\lambda\to 0^{+}$ limit will give the performance of the solution with minimum $\vert\vert \cdot\vert\vert_{\Sigma^{-1}}$ norm. 

Finally, we also note that this analysis also allows answering the important question: {\it what is being learned with random labels?}, discussed in  particular in machine learning literature in \cite{maennel2020neural}. For generalized linear models: the model is simply fitting the 2nd-order statistics (the total covariance $\bf \Sigma$). 

\subsection{Ridge regression with vanishing regularization} Even though it seems to be well obeyed in practice, one may wonder if we can in some cases get rid of the homogeneity condition. As we shall see, the answer is no: in general, a mixture of {\it inhomogeneous } Gaussian cannot be strictly replaced by a single one. It turns out, however, that there is one exception, and that the hypothesis can be lifted in one case, ridge regression with vanishing regularization with the squared loss $\ell(x, y) = \frac12(x-y)^2$:
\begin{theorem}[Strong universality for ridge loss]
\label{thm:ols}
    In the ridge regression case with vanishing regularization, i.e. when $\lambda \to 0^+$, we have
    \[\lim\limits_{\lambda\to0^{+}} \mathcal{E}_{\ell}^{\text{gmm}}(\bm \rho, \bm M, \bm \Sigma^\otimes) = \frac12 \left(1  - \frac{1}{\alpha} \right)_+, \]
    for any choice of $\bm \rho, \bm M$, or $\bm \Sigma^\otimes$.
\end{theorem}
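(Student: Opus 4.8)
The plan is to sidestep the replica fixed-point equations of Theorem~\ref{propGMM} entirely and compute the ridgeless training loss directly, exploiting that for the square loss the empirical risk minimizer is explicit and that only the \emph{independence} of the labels and the \emph{rank} of the design matrix can possibly survive. Write $\bm X\in\dR^{n\times p}$ for the matrix with rows $\bm x_\mu^\top$, so that $\bm y\in\{-1,+1\}^n$ is a Rademacher vector independent of $\bm X$. For $\lambda>0$ the minimizer of eq.~\eqref{eq:def_min_problem_main} with $\ell(x,y)=\tfrac12(x-y)^2$ is $\hat{\bm\theta}_\lambda=(\bm X^\top\bm X+n\lambda\bm I_p)^{-1}\bm X^\top\bm y$, and the push-through identity $\bm X(\bm X^\top\bm X+n\lambda\bm I_p)^{-1}\bm X^\top=\bm X\bm X^\top(\bm X\bm X^\top+n\lambda\bm I_n)^{-1}$ gives, up to the regularizer $\tfrac\lambda2\|\hat{\bm\theta}_\lambda\|_2^2$ which vanishes as $\lambda\to0^+$,
\[ \widehat\cR_n^*(\bm X,\bm y)=\tfrac12\,\bm y^\top A_\lambda\,\bm y+o_\lambda(1), \qquad A_\lambda:=n\lambda^2\,(\bm X\bm X^\top+n\lambda\bm I_n)^{-2}. \]
I would take $n\to\infty$ at fixed $\lambda$ (the inner limit identified with $\mathcal E_\ell^{\mathrm{gmm}}$ by Theorem~\ref{propGMM}) and only afterwards send $\lambda\to0^+$, matching the order of limits in the statement.

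Next I would compute the conditional mean. Conditioning on $\bm X$ and using $\mathbb E[\bm y\bm y^\top]=\bm I_n$,
\[ \mathbb E\!\left[\bm y^\top A_\lambda\bm y\mid\bm X\right]=\tr A_\lambda=\sum_{i=1}^n\frac{n\lambda^2}{(s_i+n\lambda)^2}, \]
where $s_1,\dots,s_n\ge0$ are the eigenvalues of $\bm X\bm X^\top$. For a Gaussian mixture with full-rank covariances the samples are in general position, so $\rank\bm X=\min(n,p)$ almost surely and exactly $(n-p)_+$ of the $s_i$ vanish. As $\lambda\to0^+$ every term with $s_i>0$ is killed while each zero eigenvalue contributes $\tfrac{n\lambda^2}{(n\lambda)^2}=\tfrac1n$, whence $\tfrac12\tr A_\lambda\to\tfrac{(n-p)_+}{2n}\to\tfrac12\bigl(1-\tfrac1\alpha\bigr)_+$. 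Rigorously, after $n\to\infty$ this is the atom of mass $(1-\tfrac1\alpha)_+$ at the origin in the limiting spectral distribution of $\bm X\bm X^\top/n$, isolated from the positive bulk, so the $\lambda\to0^+$ limit is clean by dominated convergence. The decisive point is that $(\bm\rho,\bm M,\bm\Sigma^\otimes)$ enter \emph{only} through $\rank\bm X$, which is exactly the universality claimed.

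It then remains to upgrade the conditional expectation to convergence in probability. Since $y_i^2=1$, we have $\bm y^\top A_\lambda\bm y-\tr A_\lambda=\sum_{i\ne j}(A_\lambda)_{ij}y_iy_j$, whose conditional variance is $2\sum_{i\ne j}(A_\lambda)_{ij}^2\le2\|A_\lambda\|_F^2$. The eigenvalues of $A_\lambda$ lie in $[0,1/n]$, so $\|A_\lambda\|_F^2\le n\cdot n^{-2}=n^{-1}$ and $\Var\!\bigl(\tfrac12\bm y^\top A_\lambda\bm y\mid\bm X\bigr)\le1/(2n)\to0$. Conditional Chebyshev then gives, for each fixed $\lambda$, $\widehat\cR_n^*\overset{\dP}{\longrightarrow}\tfrac12\tr A_\lambda$; combining with Theorem~\ref{propGMM} and sending $\lambda\to0^+$ yields the claim, manifestly independent of $\bm\rho,\bm M,\bm\Sigma^\otimes$.

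The main obstacle I would flag is the legitimacy of the double limit: since $\mathcal E_\ell^{\mathrm{gmm}}$ is the inner $n\to\infty$ limit at fixed $\lambda$, I am really evaluating $\lim_{\lambda\to0^+}\lim_{n\to\infty}$, and the argument goes through only because the nonzero eigenvalues of $\bm X\bm X^\top/n$ stay bounded away from $0$ (no mass accumulates at the origin from the positive side), so that the atom at zero is the sole surviving contribution. This, together with $\rank\bm X=\min(n,p)$, is the one place where the full-rank (general-position) property of the mixture is used; for rank-deficient covariances $p$ is replaced by the effective rank, consistent with the full-rank caveat invoked elsewhere in the paper.
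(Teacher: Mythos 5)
Your proof is correct, but it takes a genuinely different route from the paper. The paper never touches the design matrix directly: it works entirely inside the replica framework, taking the ridge specialization of the fixed-point equations of Theorem~\ref{propGMM} at $\lambda=0$ (eq.~\eqref{ref:flo0}), deriving two algebraic identities satisfied by any fixed point, namely $\sum_{c}\hat V_c^\star V_c^\star=1$ and $\sum_c(\hat V_c^\star q_c^\star - V_c^\star \hat q_c^\star)=0$, and combining them with the closed form $\mathcal{E}^{\text{gmm}}_{\ell}=\sum_c \hat q_c^\star/(2\alpha)$ to conclude $2\mathcal{E}+\tfrac1\alpha=1$ --- notably without ever solving for the $(V_c^\star,q_c^\star)$ themselves, which do depend on $\bm\Sigma^\otimes$. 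You instead compute the ridgeless loss from the explicit estimator, reducing it to $\tfrac12\bm y^\top A_\lambda\bm y$ with $A_\lambda=n\lambda^2(\bm X\bm X^\top+n\lambda \bm I_n)^{-2}$, and your two ingredients (the conditional trace identity using $\dE[\bm y\bm y^\top]=\bm I_n$, and the Frobenius-norm variance bound $\|A_\lambda\|_F^2\le 1/n$ giving conditional Chebyshev) are both sound; the spectral decomposition, the count of $(n-p)_+$ zero eigenvalues, and the push-through identity all check out. What your approach buys is transparency and generality: it isolates exactly which features of the data matter (label independence plus the corank of $\bm X$), so the universality extends in principle to any design whose limiting spectral law of $\bm X\bm X^\top/n$ carries an atom of mass exactly $(1-\tfrac1\alpha)_+$ at the origin --- well beyond Gaussian mixtures. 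What the paper's approach buys is that it needs no random-matrix input and no limit-exchange justification beyond what Theorem~\ref{propGMM} already provides: $\mathcal{E}^{\text{gmm}}$ is \emph{defined} through the fixed point, so the identity is purely algebraic once uniqueness is granted. The one point you must be careful about, and which you correctly flag yourself, is that $\lim_{\lambda\to0^+}\lim_{n\to\infty}\tfrac12\tr A_\lambda=\tfrac12\nu(\{0\})$ identifies the answer only if no positive spectral mass leaks into the atom at zero in the limit (e.g.\ if the spectra of the $\bm\Sigma_c$ do not accumulate at $0$); this is a genuine restriction on $\bm\Sigma^\otimes$, but it is the same full-rank regularity that the paper's own proof assumes implicitly when it inverts $\sum_{c'}\hat V_{c'}\bm\Sigma_{c'}$ at $\lambda=0$, so your proof matches the theorem under comparable hypotheses.
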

In particular, it means that in the unregularized limit, any Gaussian mixture behaves in terms of its loss as a single cluster Gaussian model with identity covariance, whose asymptotic training loss is given by $\lim_{\lambda\to 0^{+}}\mathcal{E}_{\ell}^{\text{gcm}}(\alpha, \lambda)=\sfrac{1}{2}(1-1/\alpha)_{+}$.

\paragraph*{Proof sketch ---}
The proof of the strong universality, that follows from a rigorous analysis of the replica predictions, amounts to showing that the replica free energy for GMM reduces to the one of a single Gaussian. Interestingly, although the fixed points of the replica equations differ between the GMM and Gaussian case, they do give rise to the same free energy.
Details can, again, be found in Appendix \ref{sec:app:uni}. \hfill \qedsymbol

\section{Numerical experiments}
\label{sec:numerics}

In this section, we  describe more in detail the numerical experiments of   Fig.~\ref{fig:all_datasets_zero_reg} \& Fig.~\ref{fig:all_datasets_finite_reg}. The coloured dots represent the outcome of the simulations on several full-rank datasets. In particular, the blue and the green dots refer to both MNIST and grayscale CIFAR-10 preprocessed with random Gaussian feature maps \citep{rahimi2007random}. In this case, the input data points are constructed as $\bm{x}_{\mu} = \sigma\left( \bm{z}_{\mu} \bm{F}  \right)$, with $\bm{z}_{\mu} \in \mathbb{R}^d$ being a sample from one of the two datasets, $\bm{F} \in \mathbb{R}^{d\times p}$ representing the matrix of random features, whose row elements are sampled according to a normal distribution, and $\sigma$ being some point-wise non-linearity, namely $\mathrm{erf}$ for MNIST and $\mathrm{relu}$ for grayscale CIFAR-10. The red dots correspond instead to fashion-MNIST pre-processed with wavelet scattering transform, an ensemble of engineered features producing rotational and translational invariant representations of the input data points \cite{bruna2013invariant}. The orange dots correspond to simulations on the synthetic dataset built as a mixture of two Gaussians, with data covariance of the two clusters both equal to the identity matrix ($\bm{\Sigma}_1 \!=\! \bm{\Sigma}_2 \!=\! \bm{I}$,  $\bm{\mu}_{1/2} \!=\! \left( \pm 1,0,...,0 \right)$ and $\rho_{1/2} \!=\! 1/2$. Further technical details are given in the appendix.

\paragraph{Experiments with finite regularization ---} Fig.~\ref{fig:all_datasets_finite_reg} illustrates the Gaussian universality taking place at finite regularization. The coloured dots correspond to the outcome of the simulations for several values of the regularization strength. As we can see from this set of plots, the theoretical learning curve of a single Gaussian with matching covariance perfectly fits the behaviour of multi-modal and more realistic input data distributions. In fact, even though the experiment is performed for a realistic dataset and finite $n$ and $p$, the asymptotic Gaussian theory gives a perfect fit of the data.

\paragraph{Experiments with vanishing regularization ---} Fig.~\ref{fig:all_datasets_zero_reg} provides an illustration of the universality effect occurring at vanishing regularization. Here we use $\lambda \!\to\!0$, and following theorem \ref{theorem:cov}, we observe a collapse on a single curve given by the asymptotic theory for a single Gaussian with unit covariance. It is  quite remarkable that our asymptotic theory, which is valid only in the infinite high-dimensional limit, is validated by such experiments with finite dimension, and finite sample size.\looseness=-1

\paragraph{Homogeneity assumption ---} A remarkable point is that the homogeneity assumption (often called homoskedasticity in statistics) we use in theorem \ref{thm:Gauss_universality}, which can be relaxed only for ridge regression, does not seem to be that important in practice, as we observed on our experiments on real data. One may thus wonder if the strong universality of Theorem \ref{thm:ols} could be proved in full generality, and not only for the ridge loss. It turns out that the answer is no. Using Proposition~\ref{propGMM}, we can actually construct an artificial mixture of Gaussians, using {\it very different} covariances for each individual Gaussians, and observe small deviations from the strict universality. A mixture of {\it non-homogeneous} Gaussians is not strictly equivalent to a single one with random labels (except, as stated in Theorem \ref{thm:ols}, for the least squares that obey a strong universality). This is illustrated in Fig.~\ref{fig:no_match_reg_only_three_blocks} where we show the disagreement in the behaviour of the training loss between a single Gaussian and a mixture of two non-homogeneous Gaussians.
This is a simple counter-example to the existence of a universal strong form of Gaussian universality, even for ridge regression (see discussions in e.g. \cite{goldt2020gaussian,bordelon2020spectrum,tomasini2022failure, Ingrosso2022, Ba2022}).\looseness=-1

It may thus come as surprising that real datasets, which certainly will not obey such a strict homogeneity of the different modes, display such a spectacular agreement with the theory. We believe that this is due to two effects: first, the deviations we observed, even in our designed counter-example, are small, so they might not even be seen in practice. Secondly, and especially after observing the data through random or scattering features, it turns out that when we measure the empirical correlation matrix of the different modes, they look quite similar. 
In fact, it has even been suggested that neural networks are {\it precisely} learning representations that find such homogeneous Gaussian mixtures \cite{papyan2020prevalence}.


\begin{figure}[t!]
\begin{center}
\centerline{\includegraphics[width=500pt]{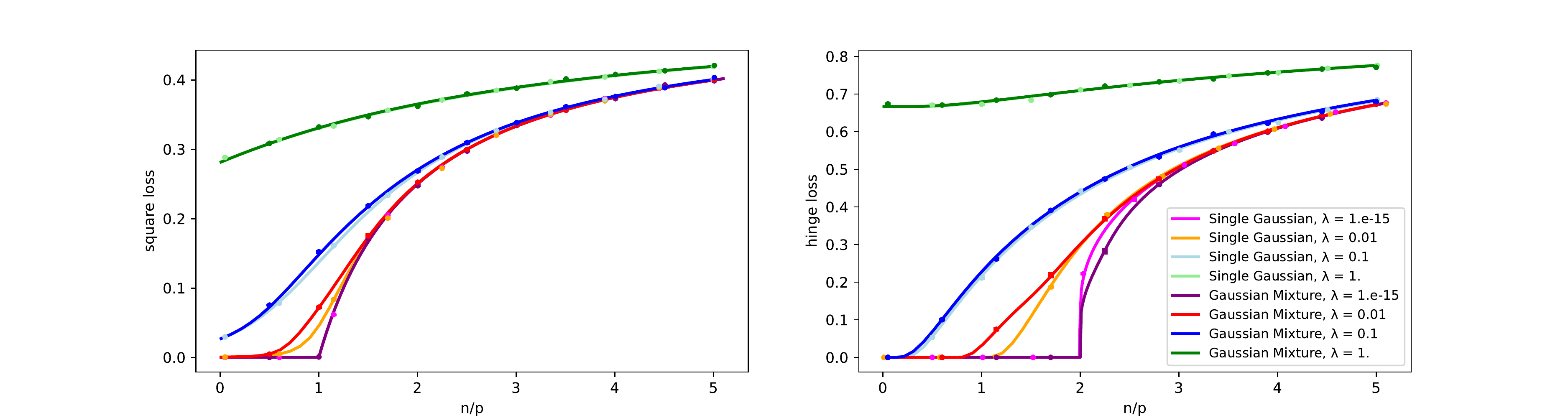}}
\caption{Ridge/square loss (left) \& 
hinge loss for a single Gaussian vs a mixture of inhomogeneous Gaussians at finite $\lambda$. Lines are the asymptotic exact results while dots are simulation ($p\!=\!900$, dark lines for mixture, lighter ones for single Gaussian). When the homogeneity assumption is not obeyed, then a mixture of two Gaussians does not yield
equal results to those of a single Gaussian with matching covariance. (Here, a mixture with zero mean and a block covariance with, resp. diagonal elements equal to $0.01$, $0.98$ and $0.01$ for the first one, and $0.495$ and $0.01$, $0.495$ for the second). Note however that the universality is restored in the Ridge case when $\lambda \to 0$, as stated in Theorem \ref{thm:ols}. It is also very well obeyed with large enough $\lambda$ and deviations appear small in general.}
\label{fig:no_match_reg_only_three_blocks}
\end{center}
\vskip -0.2in
\end{figure}

\paragraph{A remark on Rademacher complexity ---} A final comment is that the discussed universality indicates that, in high dimension, the Rademacher complexity can be effectively replaced by the one for Gaussian i.i.d. data. \emph{Rademacher complexity} is a key quantity appearing in generalization bounds for binary classification problems \cite{shalev2014understanding,vapnik1999nature} that measures the ability of estimators in a hypothesis class $\mathcal{H}$ to fit i.i.d. random labels $y^{\mu}\sim \text{Rad}(\sfrac{1}{2})$:
\begin{equation}
    \text{Rad}_n\left( \mathcal{H}\right) = \mathbb{E} \left[ \underset{h\in \mathcal{H}}{\mbox{sup}} \ \frac{1}{n} \sum_{\mu = 1}^n y_{\mu} h\left( \bm{x}_{\mu} \right) \right].
\end{equation}
It is explicitly dependent on the specific distribution of the input data points $\bm{x}_{\mu}$. As discussed in \citep{abbaras2020rademacher} there exists a direct mapping between the Rademacher complexity and the minimum 0/1 training loss - or ground state energy in the statistical physics parlance. Indeed, for a binary hypothesis class $\mathcal{H} = \{h:\mathbb{R}^{p}\to\{-1,+1\}\}$ the two are asymptotically related by the following equation:
\begin{equation}
\underset{n \rightarrow \infty}{\lim}\underset{h\in\mathcal{H}}{\inf}\frac{1}{n}\sum\limits_{\mu=1}^{n}\mathbb{P}\left(h(\bm{x_{\mu}})\neq y_{\mu}\right) = \frac{\alpha}{2}\left[ 1 - \text{Rad}_n\left( \mathcal{H}\right)  \right].
\end{equation}
Moreover, \cite{abbaras2020rademacher} discussed how to explicitly compute the Rademacher complexity for Gaussian data using the replica method from statistical physics. This is actually a classical problem, studied by the pioneers of the application of the replica method and spin glass theory to theoretical machine learning~\citep{gardner1988optimal,gardner1989three,krauth1989storage,derrida1991finite}.
Given the universality advocated in this present work, these Gaussian results thus seem to be of more relevance than previously thought, and in fact, allow us to compute a closed-form asymptotic expression for the Rademacher complexity for realistic data. This is a very interesting outcome of the Gaussian universality with random labels.

However, while we prove universality for convex losses, we so far only {\it conjecture it} for non-convex objectives, such as the ones appearing in the definition of the Rademacher complexity. 
The proof that a Gaussian mixture approximates well real datasets is still valid for non-convex losses. The identification of these mixtures to a single Gaussian is, however, using the replica formulas of \cite{loureiro2021learning, loureiro2021learning_gm} which have been proven only for the case of convex losses. Our conjecture thus depends on proving a similar result for non-convex (as well as replica symmetry breaking) losses. This (and similar questions on multi-layer networks) is left for future work.

\section{Conclusion}  
For the classical problem of fitting random labels with perceptrons aka generalized linear models in high dimensions, we showed that, far from being only a toy example, the Gaussian i.i.d. assumption is an excellent model of reality. The conclusion extends to deep-learning models in the lazy regimes as those are essentially random feature models. 
There are a number of potentially interesting extensions of this work, including non-convex losses and multi-layer neural networks, and beyond the random label cases, that should be investigated in the future. 

These results, we believe, are of special interest given the number of theoretical studies with the Gaussian design and its variants, that are amenable to exact characterization, and that turn out to be less idealistic, and more realistic, than perhaps previously assumed. We believe, in particular, that these considerably strengthen the ensemble of results obtained within the statistical physics community, as well as in the statistical analysis of high-dimensional data. We anticipate that such redemption of the Gaussian assumption will lead to more work in this direction both those using Gaussian assumption and those aiming to extend out universality results. 

\section*{Acknowledgements}
We acknowledge funding from the ERC under the European Union’s Horizon 2020 Research and Innovation Program Grant Agreement 714608-SMiLe, as well as by the Swiss National Science Foundation grant SNFS OperaGOST, $200021\_200390$ and the \textit{Choose France - CNRS AI Rising Talents} program.


\bibliographystyle{unsrt}
\bibliography{bib}
\clearpage

\newpage
\appendix
\newpage

\section{Exact asymptotic performances of GCM and GMM}
\label{formulas}
In this appendix we summarize the exact asymptotic formulas for the performance of the generalized linear classifiers on random labels for the two structured data models studied in the the main body: the Gaussian covariate model (GCM) and the Gaussian mixture model (GMM).

\subsection{Preliminaries: the setting}
\label{sec:app:preliminaries}
Before moving to the key formulas, let us recap the setting. We are interested in the performance of generalised linear classifiers:
\begin{align}
    \hat{y}(\bm{x}) = \text{sign}(\hat{\bm{\theta}}^{\top}\bm{x})
\end{align}
where $\hat{\bm{\theta}}\in\mathbb{R}^{p}$ is trained by minimising the following empirical risk on $n$ independent training samples $(\bm{x}_{\mu}, y_{\mu})_{\mu\in[n]}\in\mathbb{R}^{p}\times\{-1,+1\}$:
\begin{align}
\label{eq:app:erm}
 \widehat\cR_n^*(\bm X, \bm y) = \inf_{\bm \theta \in \cS_p} \frac1n\sum_{\mu=1}^n \ell(\bm \theta^\top \bm x_\mu, y_\mu) + 
 \frac{\lambda}{2} \vert\vert \mathbf{\bm \theta} \vert\vert_2^2 ,
 \end{align}
 for a compact subset $\cS_p\subset\mathbb{R}^{p}$ and convex loss function $\ell$. In particular, we are interested in the case where the labels $y_{\mu}\in\{-1,+1\}$ are randomized (i.e. not correlated with the inputs $\bm{x}_{\mu}$),
 \begin{align}
     y_{\mu} \sim \frac{1}{2}\left( \delta_{-1} + \delta_{+1}\right), \qquad \text{i.i.d.}
 \end{align}
 and the inputs are generated independently from one of the following two structured models:
 \begin{description}
 \item[Gaussian covariate model (GCM): ] $\bm{x}_{\mu}\sim\mathcal{N}(\bm{0}_{p}, \bm{\Sigma})$,
 \item[Gaussian mixture model (GMM): ] ${\bm x}_\mu \sim \sum_{c\in\mathcal{C}} \rho_{c}\, {\cal N}
    (\bm \mu_{c},\bm\Sigma_{c})$,
 \end{description}
where $\mathcal{C} = \{1,\cdots, K\}$ is the label set for the Gaussian clouds and $\rho_{c}\in[0,1]$ are the density of points in each class, satisfying $\sum_{c\in\mathcal{C}}\rho_{c} = 1$. Note that in this random label setting the GCM model is a special case of the GMM, where $K\coloneqq |\mathcal{C}| = 1$ and $\bm{\mu}_{1}=\bm{0}_{p}$.

In the following, we will be interested in describing the exact asymptotic limit of the following performance metrics in the proportional high-dimensional limit where $n,p\to\infty$ with the ratio $\alpha \coloneqq \sfrac{n}{p}$ and the number of clusters $K$ are fixed:
\begin{description}
\item[Training loss: ] $\hat{\mathcal{E}}_{\ell}(\bm{X},\bm{y}) \coloneqq \frac{1}{n}\sum\limits_{\mu=1}^{n}\ell\left(\hat{\bm{\theta}}^{\top}\bm{x}_{\mu}, y_{\mu}\right)$
\item[0/1 training error: ] $\hat{\mathcal{E}}_{0/1}(\bm{X},\bm{y}) \coloneqq \frac{1}{n}\sum\limits_{\mu=1}^{n}\mathbb{P}\left(\text{sign}(\hat{\bm{\theta}}^{\top}\bm{x}_{\mu})\neq y_{\mu}\right)$
\end{description}
\noindent where we have defined the design matrix $\bm{X}\in\mathbb{R}^{p\times n}$ and the label vector $\bm{y}\in\{-1,+1\}^{n}$. Note that for convenience we will focus the discussion in this appendix to these two measures. But all results could have been stated for  $\hat{\mathcal{R}}^{\star}_{n}$ instead. In particular, the training loss $\hat{\mathcal{E}}_{\ell}$ differs from the empirical risk $\hat{\mathcal{R}}^{\star}_{n}$ by the regularisation term.

\paragraph{Note on scalings --} Although the model above is well defined for any scaling, in the following we focus in the case in which the means are covariance satisfy:
\begin{align}
    ||\bm{\mu}_{c}||_{2}^{2} = O(1), && \tr\bm{\Sigma}_{c} = O(p).
    \label{assump:scalings}
\end{align}
This scaling of the mean and variance is indeed the natural one (see e.g. \cite{barkai1994statistical,lesieur2016phase,lelarge2019asymptotic,mignacco2020role,wang2021benign}) as well as the most interesting in high-dimensions. If the means have larger norm, then the problem becomes trivial (i.e. the Gaussians are trivially completely separable) while if the means are smaller it is  impossible to separate them (i.e. they become trivially indistinguishable from a single Gaussian cloud).

\paragraph{Ridge and ordinary least-squares classification --} Note that for the special case of the ridge classification in which $\ell(x,y) = \sfrac{1}{2}(y-x)^2$, the empirical risk minimization problem defined in eq.~\eqref{eq:app:erm} admits a closed form solution:
\begin{align}
\label{eq:app:ridge}
    \hat{\bm{\theta}} = \left(\lambda\bm{I}_{p}+\bm{X}\bm{X}^{\top}\right)^{-1}\bm{X}\bm{y}
\end{align}
\noindent and therefore the computation of the asymptotic training error or loss boils down to a Random Matrix Theory problem, with a solution equivalent to the one we will discuss shortly below. However, some qualitative features can be drawn just from this expression. First, note that for $\lambda>0$, the ridge estimator above will always have a non-zero training loss because of the bias introduced by the regularization term $\sfrac{1}{2}\lambda||\bm{\theta}||_{2}^{2}$. This can only be achieved in the limit of vanishing regularization $\lambda\to 0^{+}$, in which case the ridge estimator simplifies to:
\begin{align}
\label{eq:app:ln}
    \hat{\bm{\theta}}_{\text{ols}} \coloneqq (\bm{X}^{\top})^{\dagger}\bm{y}
\end{align}
where $\bm{X}^{\dagger}\in\mathbb{R}^{n\times p}$ is the Moore-Penrose inverse of $\bm{X}$. In the simplest case in which $\bm{X}$ is a full-rank matrix (which ultimately depends on the covariances), it can be explicitly written as:
\begin{align}
    \bm{X}^{\dagger} \coloneqq 
        \begin{cases}
        (\bm{X}^{\top}\bm{X})^{-1}\bm{X}^{\top} \text{ if } \alpha < 1\\
        \bm{X}^{\top}(\bm{X}\bm{X}^{\top})^{-1} \text{ if } \alpha > 1
        \end{cases}
\end{align}
An important property of the estimator in eq.~\eqref{eq:app:ln} is that it corresponds to the least $\ell_2$-norm interpolator when the system is underdetermined. Indeed, in the strict case when $\lambda=0$ (i.e. least-squares regression) the ERM problem in eq.~\eqref{eq:app:erm} is equivalent to inverting a linear system:
\begin{align}
    \bm{y} = \bm{X}^{\top}\bm{\theta}
\end{align}
i.e. to solve a system of $n$ equations for $p$ unknowns. Again, assuming the data is full-rank\footnote{The general case is given by changing $p$ for the rank of the design matrix.}, for $\alpha = \sfrac{n}{p} < 1$ the system is \emph{underdetermined}, meaning that there are infinitely many solutions that perfectly interpolate the data. Among all of them, $\hat{\bm{\theta}}_{\text{ols}}$ is the one that has lowest $\ell_{2}$-norm. Instead, when $\alpha>1$, the system is overdetermined, and no interpolating (zero-loss) solution exists.

\subsection{Gaussian mixture model with general labels}
\label{sec:app:gmm}
Exact asymptotics of generalized linear classification with Gaussian Mixtures in the proportional regime have been derived under different settings in the literature \cite{kini2021phase,sifaou2019phase,mai2019large,mignacco2020role,taheri2020optimality,dobriban2020provable}. Of particular interest to our work are the formulas proved in \cite{loureiro2021learning_gm} under the most general setting of a multi-class learning problem with convex losses \& penalties and generic means and covariances. In their work, the asymptotic performance of the minimiser in eq.~\eqref{eq:app:erm} was proven in the case where the labels are correlated to the mean. The formula we state in the text as theorem \ref{thm:app:asymp_error} is a straightforward adaptation of their result in the particular case of binary classification with $K$ clusters and randomized labels.

\paragraph{Zero mean limit:} Of particular interest for what follows is the zero-mean limit $\bm{\mu_{c}}=\bm{0}_{p}$ of the above equations, which is simply given by:
\begin{equation}\label{eq:app:sp:gmm_zeromean}
\begin{split}
&\begin{cases}
\hat m_c = 0\\
\hat{V}_{c} = \frac{\alpha}{2} \rho_{c}\sum\limits_{y\in\{-1,+1\}}\mathbb{E}_{\xi\sim\mathcal{N}(0,1)}\left[\partial_{\omega}f_{\ell}(y, \sqrt{q_{c}}\xi, V_{c})\right]\\
\hat{q}_{c} = \frac{\alpha}{2} p_{c}\sum\limits_{y\in\{-1,+1\}}\mathbb{E}_{\xi\sim\mathcal{N}(0,1)}\left[f_{\ell}(y, \sqrt{q_{c}}\xi, V_{c})^2\right]\\
\end{cases}\\
&
\begin{cases}
    m_c = 0\\
	V_{c} = \frac{1}{p}\tr\Sigma_{c}\left(\lambda\bm{I}_{p}+\sum\limits_{c'\in\mathcal{C}}\hat{V}_{c'}\Sigma_{c'}\right)^{-1}\\
q_{c} = \frac{1}{p}\sum\limits_{c'\in\mathcal{C}}\left[\hat{q}_{c'}\tr\Sigma_{c'}\Sigma_{c}\left(\lambda\bm{I}_{p}+\sum\limits_{c''\in\mathcal{C}}\hat{V}_{c''}\Sigma_{c''}\right)^{-2}\right]\\
\end{cases}
\end{split}
\end{equation}

\paragraph{A particular case: ridge classification --} The self-consistent equations above crucially depend on the loss function $\ell$. A case particular case of interest in this work - and for which the expressions considerable simply - is the case of ridge regression where $\ell(x,y)=\sfrac{1}{2}(x-y)^2$. In this case, the proximal can be explicitly writen as:
\begin{align}
    \prox_{\tau \ell(\cdot, y)}(x) = \frac{x+\tau y}{1+\tau} \quad\Leftrightarrow\quad  f_{\ell}(y,\omega,V) = \frac{y-\omega}{1+V}	
\end{align}
\noindent and therefore the asymptotic training loss admits a closed-form expression:
\begin{align}
\label{eq:app:loss:ridge:gmm}
    \mathcal{E}^{\text{gmm}}_{\ell} = \sum\limits_{c\in\mathcal{C}}\rho_{c}\frac{1+q^{\star}_{c}}{2(1+V^{\star}_{c})^2}
\end{align}
for $(V_{c}^{\star},q^{\star}_{c})_{c\in\mathcal{C}}$ solutions of the following simplified self-consistent equations:
\begin{align}
&\begin{cases}
\hat{V}_{c} = \frac{\alpha \rho_{c}}{1+V_{c}}\\
\hat{q}_{c} = \alpha p_{c}\frac{1+q_{c}}{(1+V_{c})^2}\\
\end{cases}, &&
\begin{cases}
	V_{c} &= \frac{1}{p}\tr\Sigma_{c}\left(\lambda\bm{I}_{p}+\sum\limits_{c'\in\mathcal{C}}\hat{V}_{c'}\Sigma_{c'}\right)^{-1}\\
q_{c} &= \frac{1}{p}\sum\limits_{c'\in\mathcal{C}}\left[\hat{q}_{c'}\tr\Sigma_{c'}\Sigma_{c}\left(\lambda\bm{I}_{p}+\sum\limits_{c''\in\mathcal{C}}\hat{V}_{c''}\Sigma_{c''}\right)^{-2}\right]
\end{cases}
\end{align}
Note that in particular, at the fixed point, we can also express the training loss eq.~\eqref{eq:app:loss:ridge:gmm} as:
\begin{align}
\label{eq:app:loss:ridge:gmm-flo}
    \mathcal{E}^{\text{gmm}}_{\ell} = \sum\limits_{c\in\mathcal{C}} \frac{\hat q^*_c}{2 \alpha}.
\end{align}

\subsection{Gaussian covariate model}
\label{sec:app:gcm}
The asymptotic training loss for the Gaussian covariate model for a fairly general teacher-student setting was first proven in \cite{loureiro2021learning}. Although the random label limit can be obtained from this work, as discussed in Sec.~\ref{sec:app:preliminaries} the random label Gaussian covariate model can also be seen as a particular case of the general Gaussian mixture model with $K=1$ and $\bm{\mu}_{1}=\bm{0}_{p}$. Therefore, its asymptotic performance is included in the discussion above. This lead to theorem \ref{theorem:GM} in the main text.

It is worth noting that, for the square loss the expressions simplify considerably. The training loss is given by:
\begin{align}
    \mathcal{E}^{\text{gmm}}_{\ell} = \frac{1+q^{\star}}{2(1+V^{\star})^2}
\end{align}
where $(V^{\star},q^{\star})$ are solutions of the following simplified self-consistent equations:
\begin{align}
&\begin{cases}
\hat{V} = \frac{\alpha}{1+V}\\
\hat{q} = \alpha \frac{1+q}{(1+V)^2}\\
\end{cases}, &&
\begin{cases}
	V &= \frac{1}{p}\tr\Sigma\left(\lambda\bm{I}_{p}+\hat{V}\Sigma\right)^{-1}\\
q &= \frac{1}{p}\hat{q}\tr\Sigma^2\left(\lambda\bm{I}_{p}+\hat{V}\Sigma\right)^{-2}
\end{cases}
\end{align}
Since the covariance $\Sigma$ is positive-definite (and therefore invertible), in the overdetermined regime (for which the training loss is non-zero), the limit $\lambda\to 0^{+}$ can be easily taken, and the equations reduce to:
\begin{align}
&\begin{cases}
\hat{V} = \frac{\alpha}{1+V}\\
\hat{q} = \alpha \frac{1+q}{(1+V)^2}\\
\end{cases}, &&
\begin{cases}
	V &= \frac{1}{\hat{V}}\\
q &= \frac{\hat{q}}{\hat{V}}
\end{cases}
\end{align}
\noindent which is completely independent of the covariance matrix $\Sigma$\footnote{As discussed in Theorem \ref{theorem:cov}, the fact that the loss is independent of $\Sigma$ in this regime can be directly seen from the optimization.}. Moreover, it admits a closed-form solution given by:
\begin{align}
    V^{\star}=q^{\star}=\frac{1}{\alpha-1}, && \hat{V}^{\star} = \hat{q}^{\star} = \alpha-1
\end{align}
Therefore, the full training loss is given by:
\begin{align}
    \lim\limits_{\lambda\to 0^{+}}\mathcal{E}_{\ell}^{\text{gcm}}(\alpha, \lambda) = 
    \begin{cases} 0  & \text{ for } \alpha \leq 1\\ \frac{1}{2}\left(1-\frac{1}{\alpha}\right) & \text{ for } \alpha > 1
    \end{cases}
\end{align}

\newpage
\section{From Gaussian mixture to single Gaussian}
\label{proof2}

\subsection{Mixture of Gaussians with zero means}
\label{sec:app:zeromean}

We first prove Lemma \ref{lemma:mean_universality} in the main text. First, by Theorem \ref{thm:app:asymp_error}, the asymptotic loss $\mathcal{E}_{\ell}^{\text{gmm}}(\bm \rho, \bm M, \bm \Sigma^\otimes)$ (resp. $\mathcal{E}_{\ell}^{\text{gmm}}(\bm \rho, \bm 0, \bm \Sigma^\otimes)$)  is a deterministic function of $(m_c^\star, q_c^\star, V_c^\star)_{c\in \cC}$, which are the \emph{unique} fixed points of \eqref{eq:app:sp:gmm} (resp \eqref{eq:app:sp:gmm_zeromean}). Since both saddle points equations differ only by setting $m_c = \hat m_c = 0$, Lemma \ref{lemma:mean_universality} is a consequence of the following:
\begin{lemma}
Let $(V_c^\star, q_c^\star)_{c\in\cC}$ be the solutions of Eqs. \eqref{eq:app:sp:gmm_zeromean}. Then, $(0, V_c^\star, q_c^\star)_{c \in \cC}$ satisfy the general fixed point equations of \eqref{eq:app:sp:gmm}.
\end{lemma}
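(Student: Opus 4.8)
The plan is to verify directly that the candidate point $(m_c, V_c, q_c) = (0, V_c^\star, q_c^\star)$, together with the hatted variables $(\hat m_c, \hat V_c, \hat q_c) = (0, \hat V_c^\star, \hat q_c^\star)$ produced by the zero-mean equations \eqref{eq:app:sp:gmm_zeromean}, solves the full system \eqref{eq:app:sp:gmm}. The only equations in \eqref{eq:app:sp:gmm} that are not literally identical to their counterparts in \eqref{eq:app:sp:gmm_zeromean} are those involving $m_c$ and $\hat m_c$. Hence the heart of the matter is to show that the symmetry of the loss forces these overlap parameters to vanish, after which every remaining equation collapses onto its zero-mean version.

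First I would record the symmetry inherited by the auxiliary function $f_\ell$. Starting from $\ell(x,y)=\ell(-x,-y)$ (equivalently $\ell(z,-y)=\ell(-z,y)$) and the definition of the proximal operator, a change of variables $z\mapsto -z$ in the minimization gives $\prox_{V\ell(\cdot,-y)}(-\omega) = -\prox_{V\ell(\cdot,y)}(\omega)$. Plugging this into $f_\ell(y,\omega,V) = V^{-1}(\prox_{V\ell(\cdot,y)}(\omega)-\omega)$ yields the single structural fact the argument rests on,
\[ f_\ell(-y,-\omega,V) = -f_\ell(y,\omega,V). \]

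Second, I would use this oddness to kill the mean channel. Setting $m_c = 0$, the $\hat m_c$ equation of \eqref{eq:app:sp:gmm} reads $\hat m_c = \tfrac{\alpha}{2}\, p_c \sum_{y\in\{-1,+1\}}\mathbb{E}_\xi[f_\ell(y,\sqrt{q_c}\xi,V_c)]$. In the $y=-1$ term I would substitute $\xi\mapsto-\xi$, which leaves the standard Gaussian law invariant, and then apply the oddness identity with $y=+1$ and $\omega=\sqrt{q_c}\xi$; the $y=-1$ term becomes exactly the negative of the $y=+1$ term, so the sum vanishes and $\hat m_c = 0$. This is precisely the zero-mean prescription $\hat m_c = 0$.

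Finally, I would feed $m_c=\hat m_c=0$ back into the remaining equations. With $m_c=0$ the arguments $m_c+\sqrt{q_c}\xi$ reduce to $\sqrt{q_c}\xi$, so the $\hat V_c$ and $\hat q_c$ equations become those of \eqref{eq:app:sp:gmm_zeromean}; the $V_c$ equation never involved the means; in the $q_c$ equation the cross term $\hat m_c\hat m_{c'}\bm{\mu}_{c'}\bm{\mu}_{c}^{\top}$ vanishes because $\hat m_c=0$, leaving exactly the zero-mean $q_c$ equation; and the $m_c$ equation reduces to $m_c=0$, consistent with our choice. Thus $(0,V_c^\star,q_c^\star)$ satisfies \eqref{eq:app:sp:gmm}. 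The only place where any real work happens is the proximal symmetry and its transfer through the Gaussian expectation; everything else is bookkeeping. Note that I do not need uniqueness of the fixed point here — that is invoked separately in Lemma \ref{lemma:mean_universality} to conclude the equality of the two limiting loss values.
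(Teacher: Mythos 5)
Your proposal is correct and follows essentially the same route as the paper's proof: both reduce the problem to the single symmetry identity $f_\ell(-y,-\omega,V) = -f_\ell(y,\omega,V)$, obtained via the change of variables $z \mapsto -z$ in the proximal operator, and then use the symmetry of the Gaussian $\xi$ to conclude $\hat m_c = 0$, after which the remaining equations coincide with the zero-mean system. The only cosmetic difference is that the paper packages the cancellation by defining $g(\omega,V) = f_\ell(-1,\omega,V) + f_\ell(+1,\omega,V)$ and noting it is odd in $\omega$, whereas you perform the substitution $\xi \mapsto -\xi$ directly in the $y=-1$ term --- the same computation.
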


\begin{proof}
If we plug in $m_c = \hat m_c = 0$ for all $c\in \cC$, the equations for $V_c, \hat V_c, q_c, \hat q_c$ become identical in \eqref{eq:app:sp:gmm} and \eqref{eq:app:sp:gmm_zeromean}. It is also easy to check that $\hat m_c = 0$ for all $c$ implies that $m_c = 0$; what remains is to show that the last equation holds, i.e. 
\begin{equation}
    \frac{\alpha}{2} \rho_{c}\sum\limits_{y\in\{-1,+1\}}\mathbb{E}_{\xi\sim\mathcal{N}(0,1)}\left[f_{\ell}(y, \sqrt{q_{c}^\star}\xi, V_{c}^\star)\right] = 0.
\end{equation}

Define the function
\[ g(\omega, V) = f_{\ell}(-1,\omega,V) + f_{\ell}(+1,\omega,V), \]
so that
\[ \hat m_c^\star \propto \dE_{\xi\sim\cN(0, 1)}\left[
g(\sqrt{q_c^\star}\xi, V_c^\star)\right]\]
We shall show that $g$ is odd in $\omega$; since $\xi$ is centered, the lemma will be proven. To do so, we shall show that
\[ f_\ell(y, \omega, V) = -f_\ell(-y, -\omega, V), \]
for all $y \in \{-1, +1\}$, $\omega\in \dR$, and $V\in \dR$. By definition, we have
\[ f_{\ell}(y,\omega,V) = V^{-1}\left(\prox_{V\ell(\cdot, y)}(\omega)-\omega\right), \]
and the linear term in $\omega$ is immediate. For the proximal operator, we use the symmetry of $\ell$ and write
\begin{align*}
   \prox_{V\ell(\cdot, y)}(\omega) &= \argmin_{z\in\cC_1} \left[\frac{1}{2\tau}(z-\omega)^2+\ell(z,y)\right] \\
   &= \argmin_{z\in\cC_1} \left[\frac{1}{2\tau}((-z)-(-\omega))^2+\ell(-z,-y)\right] \\
   &= -\prox_{V\ell(\cdot, -y)}(-\omega),
\end{align*}
which concludes the proof.
\end{proof}

\subsection{Strong universality of ordinary least-squares}
\label{sec:app:uni}
We now have all elements we need to establish the universality of the ordinary least-squares estimator stated in Theorem \ref{thm:ols} in the main.  Our starting point is the ordinary least-squares problem for the Gaussian Mixture Model in the overdetermined regime $\alpha > 1$. In this cave, the training loss is given by eq.~\eqref{eq:app:loss:ridge:gmm} with $(V^{\star}_{c},q^{\star}_{c})_{c\in\mathcal{C}}$ unique solutions of the following equations:
\begin{align}
&\begin{cases}
\hat{V}_{c} = \frac{\alpha \rho_{c}}{1+V_{c}}\\
\hat{q}_{c} = \alpha \rho_{c}\frac{1+q_{c}}{(1+V_{c})^2}\\
\end{cases}, &&
\begin{cases}
	V_{c} &= \frac{1}{d}\tr\Sigma_{c}\left(\sum\limits_{c'\in\mathcal{C}}\hat{V}_{c'}\Sigma_{c'}\right)^{-1}\\
q_{c} &= \frac{1}{d}\sum\limits_{c'\in\mathcal{C}}\left[\hat{q}_{c'}\tr\Sigma_{c'}\Sigma_{c}\left(\sum\limits_{c''\in\mathcal{C}}\hat{V}_{c''}\Sigma_{c''}\right)^{-2}\right]
\end{cases}
\label{ref:flo0}
\end{align}	

We shall now show how to reduce these equation to a simple analytical formula, equivalent to the one of a single Gaussian. Combining the equations for $\hat{V}_{c}$ and $V_{c}$, one sees that the fixed point must satisfy the following identity:
\begin{align}
    \sum\limits_{c\in\mathcal{C}}\hat{V}^\star_{c}V^\star_{c} = 1
\end{align}
Similarly, multiplying the equation for $q_{c}$ by $\hat{V}_{c}$, summing over $c\in\mathcal{C}$ and doing the same for the equation for $\hat{q}_{c}$ with $V_{c}$, we get a second identity satisfied by the fixed-point:
\begin{align}
    \sum\limits_{c\in\mathcal{C}}\left(\hat{V}_{c}^\star q_{c}^\star-V_{c}^\star\hat{q}_{c}^\star\right) = 0
\end{align}
Note that, at this point these relations could have been derived for any loss functions. For the specific case of the square loss, further substituting the hat variables, these conditions are equivalent to:
\begin{align}
    \sum\limits_{c\in\mathcal{C}}\rho_{c}\frac{V_{c}^\star}{1+V_{c}^\star} = \frac{1}{\alpha} \label{ref:flo1} \\ \sum\limits_{c\in\mathcal{C}}\rho_{c}\frac{V_{c}-q_{c}}{(1+V_{c})^{2}} = 0 \label{ref:flo2}
\end{align}
We thus find, combining the eq.~\eqref{ref:flo0} for $\hat q_c$ with eq.~\eqref{ref:flo2}
\begin{align}
    \sum\limits_{c\in\mathcal{C}} \hat q_c^\star = \sum\limits_{c\in\mathcal{C}} \alpha \rho_c\frac{1+V^\star_{c}}{(1+V^\star_{c})^2} 
    = \sum\limits_{c\in\mathcal{C}} \alpha \rho_c\frac 1{1+V^\star_{c}} 
    \label{ref:flo3} 
\end{align}
Our goal is to evaluate the loss at the fixed point, which is given by eq.~\eqref{eq:app:loss:ridge:gmm-flo}:
\begin{align}
    \mathcal{E}^{\text{gmm}}_{\ell} = \sum\limits_{c\in\mathcal{C}} \frac{\hat q^\star_c}{2 \alpha}
\end{align}
Combining this definition with eqs.~\eqref{ref:flo2} and  \eqref{ref:flo3}, we find that
\begin{align}
    2\mathcal{E}^{\text{gmm}}_{\ell} + \frac 1{\alpha} = 
    \sum\limits_{c\in\mathcal{C}}  \rho_c\frac 1{1+V^\star_{c}} +  \sum\limits_{c\in\mathcal{C}}\rho_c\frac{V^\star_{c}}{1+V^\star_{c}} = 1
\end{align}
so that finally, we reach the promised result:
\begin{align}
    \lim\limits_{\lambda\to 0^{+}} \mathcal{E}^{\text{gmm}}_{\ell}(\alpha,\lambda,K) = \frac{1}{2}\left(1-\frac{1}{\alpha}\right)_{+} = \lim\limits_{\lambda\to 0^{+}}\mathcal{E}_{\ell}^{\text{gcm}}(\alpha,\lambda)
\end{align}
as claimed in Theorem \ref{thm:ols} in the main.
\newpage
\section{Numerical Simulations}
\label{numerical_simulations}

In this section, we provide further details concerning the protocol we used to perform the numerical simulations, which corroborate the theoretical results exemplified in the main manuscript. All codes are publicly available on the GitHub repository associated to the current paper at  
\href{https://github.com/IdePHICS/RandomLabelsUniversality}{https://github.com/IdePHICS/RandomLabelsUniversality}. For concreteness and completeness, we illustrate these simulations with yet again a different case with respect to what was already presented in the main text in Fig. \ref{fig:tiny_imagenet_finite_reg}, where we use a smaller version of the well-known Imagenet benchmark \cite{deng2009imagenet}. It is made of $100.000$ natural images, downsampled to $64 \times 64$ pixels each and grouped into $200$ different classes.
 
\begin{figure}[ht]
\begin{center}
\centerline{\includegraphics[width=500pt]{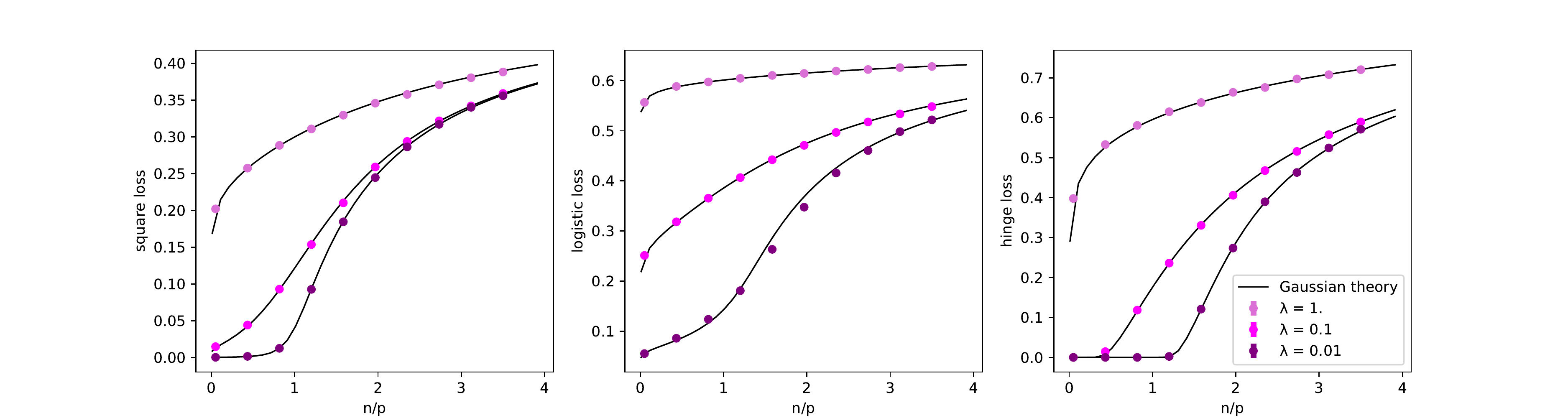}}
\caption{Numerical simulations of universality: As in Fig. \ref{fig:all_datasets_finite_reg}, this figure shows the training loss as a function of the number of samples $n$ per dimension $p$ at various values of $\lambda$ for another data set we used here for completeness. Here we used a grayscale tiny-Imagenet pre-processed with Gaussian random features and $\mbox{tanh}$ non-linearity. In the left panel the square loss, in the middle panel the logistic loss and in the right panel the hinge loss. The coloured dots refer to numerical simulations while the black solid lines correspond to the theoretical prediction of single Gaussian with corresponding input covariance matrices. The numerical simulations are averaged over $10$ different realizations.}
\label{fig:tiny_imagenet_finite_reg}
\end{center}
\vskip -0.2in
\end{figure}

In all the numerical experiments on real datasets shown so far, we have both normalized and then pre-processed the datasets with either random features or wavelet-scattering transform. Fig. \ref{fig:all_datasets_finite_reg_direct} compares instead the predictions of the Gaussian theory with respect to the numerical simulations on MNIST, fashion-MNIST, CIFAR10 and tiny ImageNet when no pre-processing is applied. As can be seen, despite the overall quite good agreement between theory and numerical experiments, we start observing some (very) small deviations from the Gaussian predictions. Indeed, as it is shown in sec. \ref{evaluation_data_covariance}, the covariance matrices associated to the different modes of the underlying real data distribution are, in this case, more heterogeneous than the ones observed when a pre-processing stage is applied. This is consistent with the homogeneous assumption in Theorem \ref{thm:Gauss_universality} and implying Gaussian Universality.

\begin{figure}[t!]
\begin{center}
\centerline{\includegraphics[width=500pt]{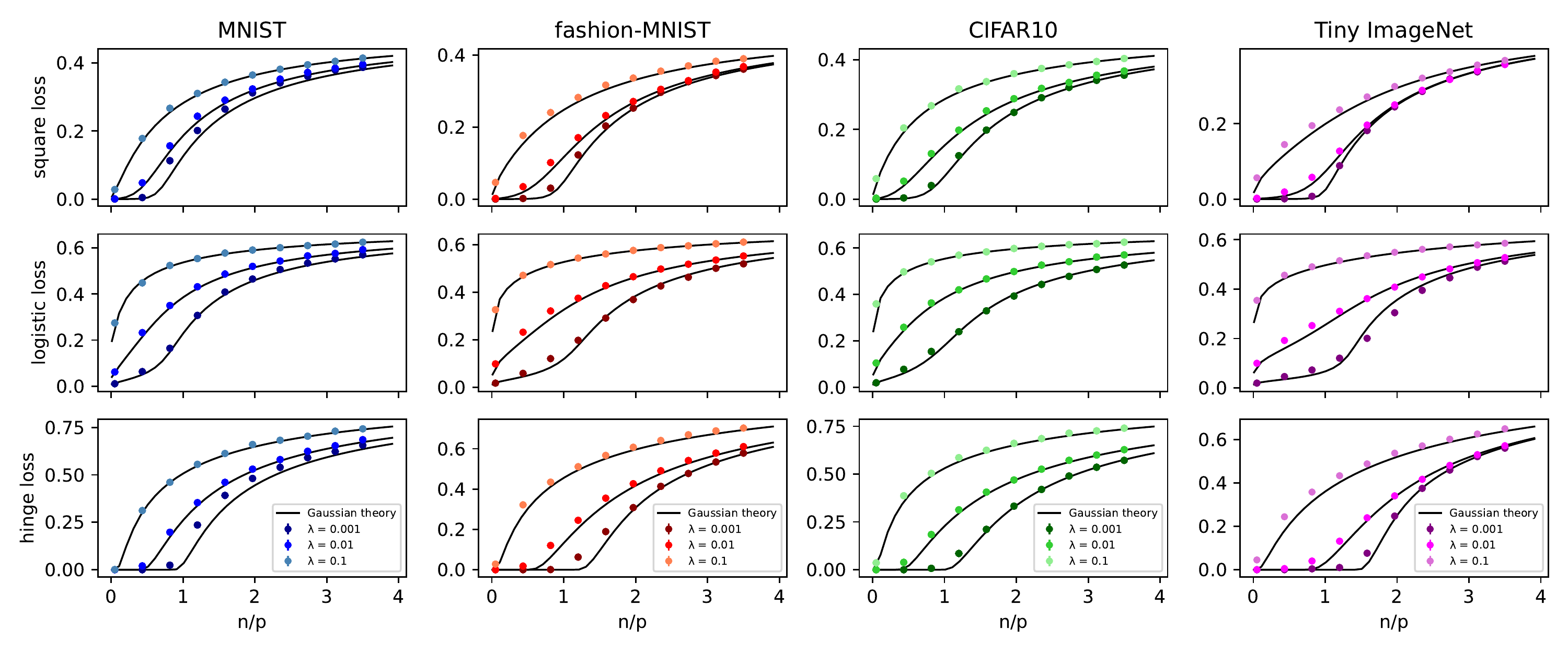}}
\caption{This figure shows the training loss as a function of the number of samples $n$ per dimension $p$ at finite regularization $\lambda$. In the top panel the square loss, and in the bottom panel the hinge loss. The first column refers to MNIST, the second column corresponds to fashion-MNIST, the third column corresponds to CIFAR10 in grayscale, the fourth column corresponds to tiny ImageNet in grayscale. Black solid lines correspond to the outcome of the replica calculation, obtained by assigning to $\Sigma$ the covariance matrix of each dataset. The coloured dots correspond to the simulations for different values of $\lambda$, as specified in the plot legend. Simulations are averaged over $10$ samples \& the error bars are not visible at the plot scale.}
\label{fig:all_datasets_finite_reg_direct}
\end{center}
\vskip -0.4in
\end{figure}

\paragraph{Dataset generation.} As we have seen in the main manuscript, we basically deal with three different types of datasets. Two of them are synthetic datasets and correspond to i.i.d Gaussian input data-points and Gaussian Mixtures. The remaining one accounts for real datasets, such as MNIST \cite{deng2012mnist}, fashion-MNIST \cite{xiao2017/online} and CIFAR10 \cite{xiao2017/online} in grayscale, pre-processed with either random feature maps \cite{rahimi2007random} or through wavelet scattering transform \cite{bruna2013invariant}. The procedure used to generate these kinds of datasets is exemplified in sec. \ref{sec:numerics}. For the sake of clarity, we summarized it through the pseudo code in algorithm \ref{alg:app:dataset}. 

\begin{algorithm}[H]
   \caption{Generating dataset $\mathcal{D} = \{\bm{x}^{\mu}, y^{\mu}\}_{\mu=1}^{n}$}
   \label{alg:app:dataset}
\begin{algorithmic}
   \STATE {\bfseries Input:} Integer $p$, flag \emph{dataset}, matrix $F \in \mathbb{R}^{d\times p}$ of random Gaussian features
   \STATE {\bfseries If} the \emph{dataset type} is i.i.d. Gaussian:
        \STATE \hspace{2mm} Sample each input data-point as $\bm{x}^{\mu} \sim \mathcal{N}\left(0, \bm{I} \right)$, with $\bm{I} \in \mathbb{R}^{p\times p}$ the identity matrix;
   \STATE {\bfseries Else if} the \emph{dataset type} is a Gaussian Mixture:
        \STATE \hspace{2mm} Sample each input data-point as $\bm{x}^{\mu} \sim \sum_{k=1}^K \rho_k \ \mathcal{N}\left(\bm{\mu}_k, \Sigma_k \right)$, with $\bm{\mu}_{k}$ being the centroid of
        \STATE \hspace{2mm} the k-th cluster and $\Sigma_k$ the corresponding covariance matrix;
    \STATE {\bfseries Else if} the \emph{dataset type} is a real dataset pre-processed with random gaussian features:
        \STATE \hspace{2mm} Load the real dataset samples $\bm{z}^{\mu} \ \forall \mu = 1,...,n$ with Pytorch dataloaders;
        \STATE \hspace{2mm} Assign $\bm{x}^{\mu} \rightarrow \sigma \left( \bm{z}^{\mu} F\right)$;
    \STATE {\bfseries Else if} the \emph{dataset type} is a real dataset pre-processed with wavelet scattering:
        \STATE \hspace{2mm} Load the real dataset samples $\bm{z}^{\mu} \ \forall \mu = 1,...,n$;
        \STATE \hspace{2mm} Apply wavelet scattering transform on $\bm{z}^{\mu}$;
   \STATE Sample the labels according to the Rademacher distribution as $y^{\mu} \sim \frac{1}{2}\left(\delta_{+1} + \delta_{-1} \right)$
   \STATE {\bfseries Return: $\mathcal{D} = \{\bm{x}^{\mu}, y^{\mu}\}_{\mu=1}^{n}$} 
\end{algorithmic}
\end{algorithm}

The real datasets are loaded through Pytorch dataloaders  \cite{NEURIPS2019_9015}. In particular, the dataloader of CIFAR10 includes a grayscale transformation of the dataset in order to reduce to one the number of input channels of the RGB colour encoding scheme. The wavelet scattering transform is instead implemented by means of the Kymatio Python library \cite{andreux2020kymatio}. Note that, with the purpose of speeding up the realization of the learning curves and to reduce fluctuations, the pre-processed real datasets are generated once for all through algorithm \ref{alg:app:dataset} and then stored in a hdf5 file.  

\paragraph{Learning phase.} Given the dataset generated as in algorithm \ref{alg:app:dataset}, the aim is to infer the estimator $\bm{\theta}$ minimizing the empirical risk as in eq. \eqref{eq:def_min_problem_main} of the main manuscript. In the present work we consider three distinct kinds of loss functions:
\begin{enumerate}[wide=2pt]
    \item \textbf{Square Loss.} In this specific case, the goal is to solve the following optimization problem:
    \begin{equation}
    \label{eq:app:def_min_problem_square_loss}
    \widehat\cR_n^*(\bm X, \bm y) = \inf_{\bm \theta \in \cS_p} \frac{1}{2n}\sum_{\mu=1}^n (\bm \theta^\top \bm x_\mu -  y_\mu)^2 + 
    \frac{\lambda}{2} \vert\vert \bm{\bm \theta} \vert\vert_2^2 ,
    \end{equation}
    The estimator can be here determined through the Moore-Penrose inverse as it follows, without relying on any learning algorithm:
    \begin{equation}
    \bm \theta = \begin{cases}
    \left( \bm{X}^{\top}\bm{X} + \lambda \bm{I}_{p} \right)^{-1} \bm{X}^{\top} \bm{y} , & \text{if} \ n > p\\\\
    \bm{X}^{\top}\left(\bm{X}\bm{X}^{\top} + \lambda \bm{I}_{n} \right)^{-1} \bm{y} , & \text{if} \ p > n 
    \end{cases}
    \end{equation}
    \item \textbf{Logistic Loss.} In this specific case, the goal is to solve the following optimization problem:
    \begin{equation}
    \label{eq:app:def_min_problem_logistic_loss}
    \widehat\cR_n^*(\bm X, \bm y) = \inf_{\bm \theta \in \cS_p} \frac{1}{n}\sum_{\mu=1}^n \mbox{log}\left(1 + \mbox{exp}(-y_\mu\bm \theta^\top \bm x_\mu)\right) + 
    \frac{\lambda}{2} \vert\vert \bm{\bm \theta} \vert\vert_2^2 ,
    \end{equation}
    Since the estimator of logistic regression can not be determined through an explicit closed formula, we here made use of the \emph{lbgfs} solver with \emph{penalty} set to $\ell_2$. This optimizer corresponds to a Gradient Descent (GD)-like second order optimization method and it is implemented in the LogisticRegression class of the Scikit-Learn Python library \cite{scikit-learn}. The GD algorithm stops either if a maximum number of iterations has been reached or if the maximum component of the gradient goes below a certain threshold. We fixed this tolerance to $1e-5$ and the maximum number of iterations to $1e4$. 
    \item{\textbf{Hinge Loss.}} In this specific case, the goal is to solve the following optimization problem:
    \begin{equation}
    \label{eq:app:def_min_problem_hinge_loss}
    \widehat\cR_n^*(\bm X, \bm y) = \inf_{\bm \theta \in \cS_p} \frac{1}{n}\sum_{\mu=1}^n \mbox{max}\left(0, 1 - y_\mu\bm \theta^\top \bm x_\mu)\right) + 
    \frac{\lambda}{2} \vert\vert \bm{\bm \theta} \vert\vert_2^2 ,
    \end{equation}
    As for logistic regression, even in this case we can not rely on any explicit formula for the estimator, it has rather to be inferred by means of learning algorithm. In particular, for the simulations at finite regularization strength, we made use of the LinearSVC class provided by Scikit-Learn \cite{scikit-learn} and implementing the Support Vector Classification (SVC) with linear kernels and $L_2$ regularization if \emph{penalty} is set to $\ell_2$. In this case, we set the tolerance of convergence to $1e-5$ and the maximum number of iterations to $5e5$. Unfortunately, LinearSVC struggles to converge for vanishing regularization strengths. Therefore, we made use of CVXPY \cite{agrawal2018rewriting,diamond2016cvxpy} in order to perform the simulations at $\lambda = 1e-15$. CVXPY is an open-source Python-embedded modeling language for convex optimization problems. We set the \emph{solver} option to None, in this way CVXPY chose automatically the most specialized solver for the optimization problem type. While being slower than LinearSVC, CVXPY guarantees convergence at vanishing regularization strengths. 
\end{enumerate}

At the end of the training process, we evaluate the training loss $\ell$ on the minimizer of the corresponding empirical risk minimization problem. To get the learning curves, we then repeat the whole process for a specified range of $n/p$ and for a certain number of different realization of the learning problem, as exemplified in algorithm \ref{alg:app:learning}.

\begin{algorithm}[H]
   \caption{Learning curve}
   \label{alg:app:learning}
\begin{algorithmic}
   \STATE {\bfseries Input:} range of $n/p$, flag \emph{dataset type}, flag \emph{which estimator}
   \STATE {\bfseries For} $seed$ in a specified number of seeds {\bfseries do}:
   \STATE \hspace{2mm} {\bfseries For} $n/p$ in a specified range {\bfseries do}:
   \STATE \hspace{6mm} Choose the dataset according to \emph{dataset type};
   \STATE \hspace{6mm} Compute the estimator according to the desired optimization problem as in (i)-(iii);
   \STATE \hspace{6mm} Compute the training loss $\ell$ at fixed $n/p$;
   \STATE \hspace{2mm} Update the mean train loss and its standard deviation with the new contribution from the current seed.
   \STATE {\bfseries Return:} Mean train loss and standard deviation as a function of $n/p$.
\end{algorithmic}
\end{algorithm}
\section{Empirical evidence of the homogenity assumption}
\label{evaluation_data_covariance}

As seen in the counter example illustrated in Fig.~\ref{fig:no_match_reg_only_three_blocks}, in the case of very heterogeneous Gaussian Mixtures we can observe small deviations from universality both at zero and finite regularization. However, this disagreement between single Gaussian and Gaussian Mixtures does not appear in the experiments with real datasets of Fig.~\ref{fig:all_datasets_zero_reg} and Fig.~\ref{fig:all_datasets_finite_reg}, despite their certainly multi-modal and mode-heterogeneity nature. First, we must acknowledge that deviations are, in general, observed to be small with respect to the homogeneous case, and that the data presented in Fig.~\ref{fig:no_match_reg_only_three_blocks} were carefully tuned so that the difference is visible. 

Additionally, in this section, we also empirically demonstrate the similarity among the empirical correlation matrices of the various modes characterizing real dataset distributions.  Fig.~\ref{fig:true_cifar10_cov} shows the correlation matrix of all grayscale CIFAR-10 images depicting airplanes (leftmost), automobiles (middle) and trucks (rightmost) respectively. The point we wish to convey
in this plot is that, despite the fact that there exists some modes of the CIFAR-10 empirical distribution which display a consistently different correlation structure (airplane mode) with respect to the other modes (automobile and truck mode), there exists some others which look like more similar among each other (automobile and truck mode).  

\begin{figure}[ht]
\begin{center}
\centerline{\includegraphics[width=380pt]{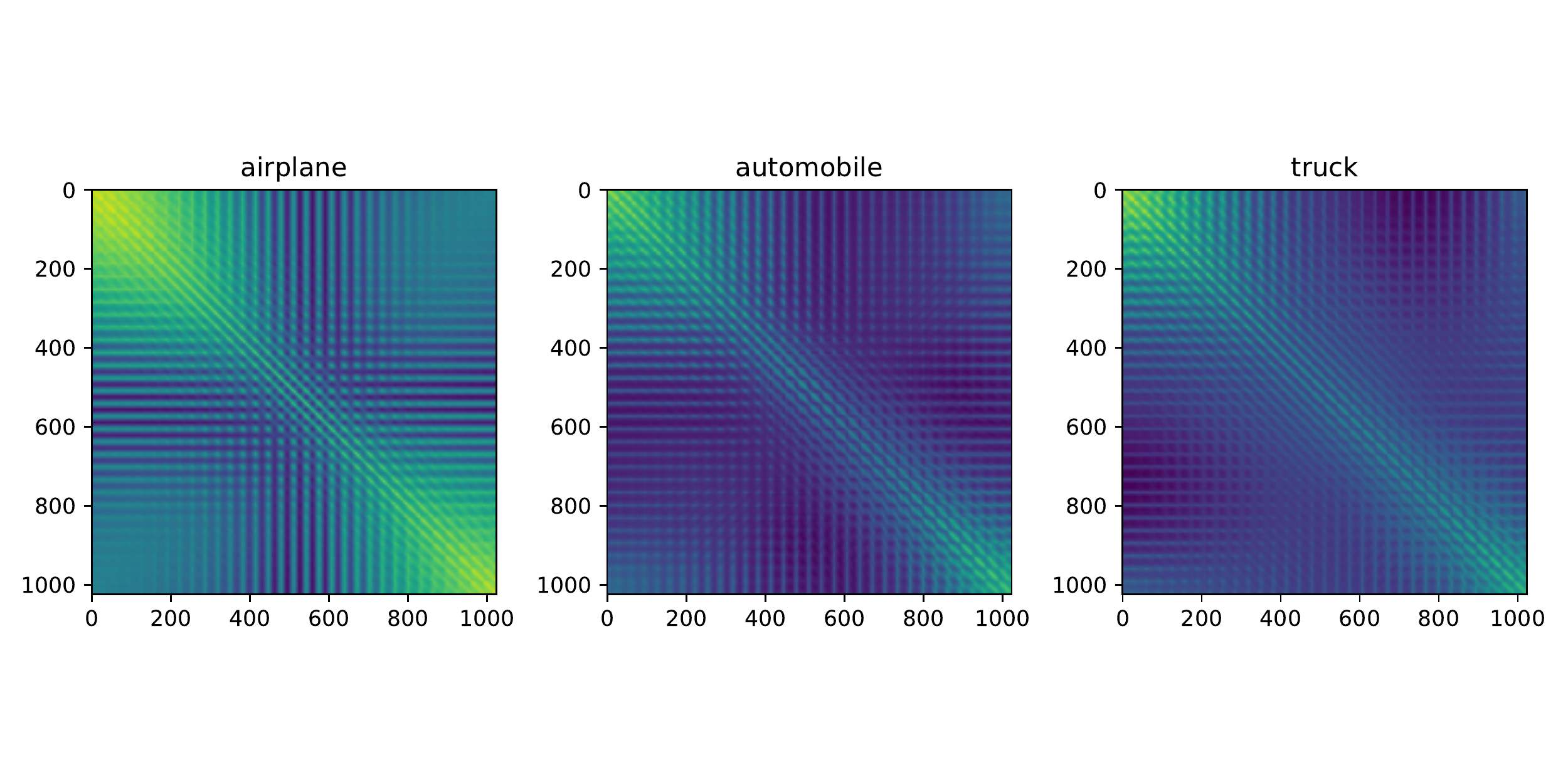}}
\vspace{-10mm}
\caption{Input data correlation matrix of grayscale CIFAR10, conditioned on the true labels, e.g. airplane (leftmost), automobile (middle), truck (rightmost). Lighter colors refer to stronger correlation. }
\label{fig:true_cifar10_cov}
\end{center}
\vskip -0.2in
\end{figure}

As can be seen in Fig.~\ref{fig:rf_cifar10_cov} and Fig.~\ref{fig:ws_cifar10_cov}, the structure similarity of the covariance matrices of the various mode is further enhanced when pre-processing grayscale CIFAR-10 with both Gaussian random feature maps and wavelet scattering transforms, at the point that even the less similar modes in the raw dataset conform to the others (see airplane mode). 

\begin{figure}[ht]
\begin{center}
\centerline{\includegraphics[width=380pt]{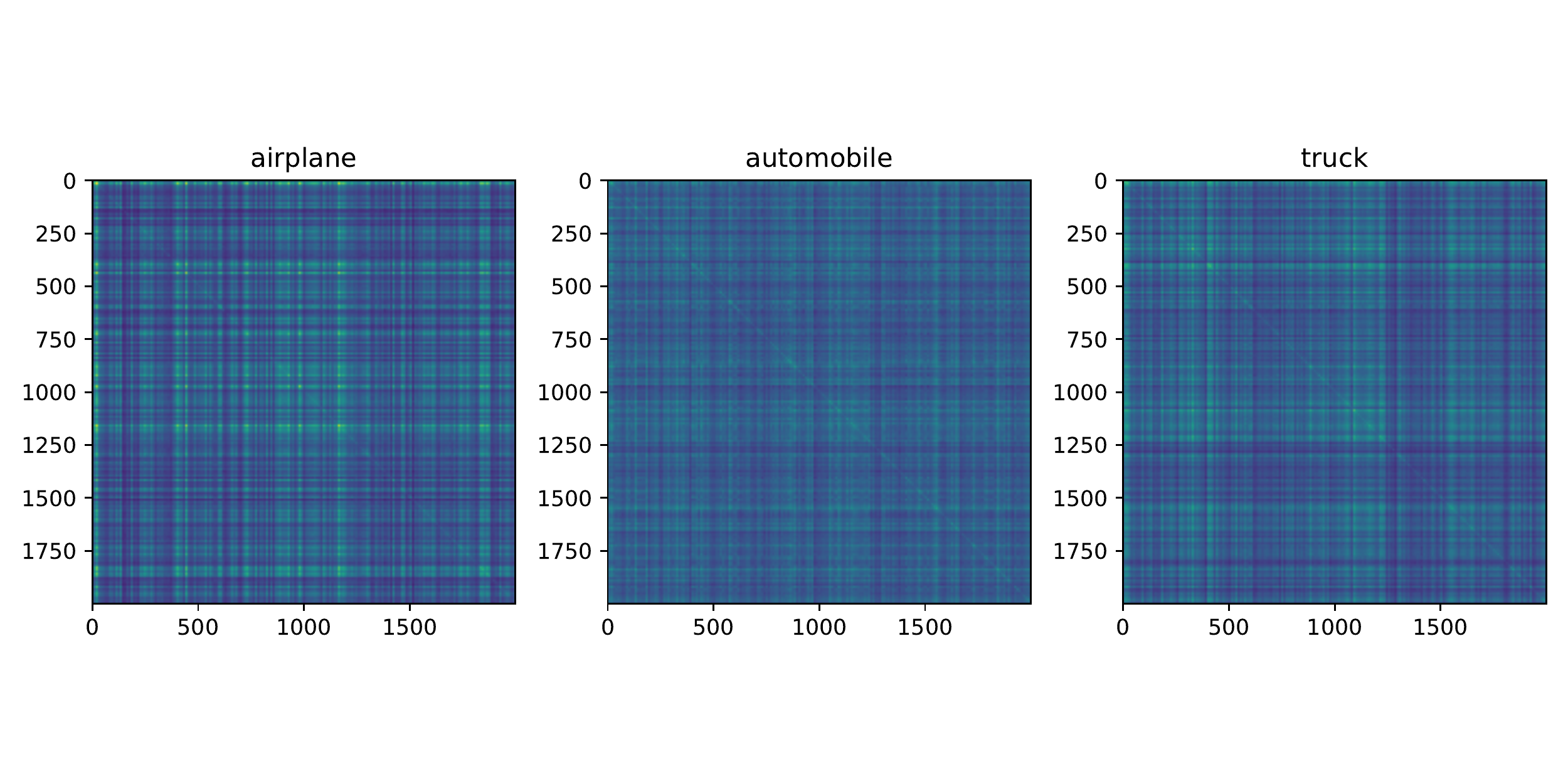}}
\vspace{-10mm}
\caption{Input data correlation matrix of grayscale CIFAR10 pre-processed with Gaussian random features and $\mbox{erf}$ non-linearity. The correlation matrices are conditioned on the true labels, e.g. airplane (leftmost), automobile (middle), truck (rightmost). Lighter colors refer to stronger correlation. }
\label{fig:rf_cifar10_cov}
\end{center}
\vskip -0.2in
\end{figure}

\begin{figure}[ht]
\begin{center}
\centerline{\includegraphics[width=380pt]{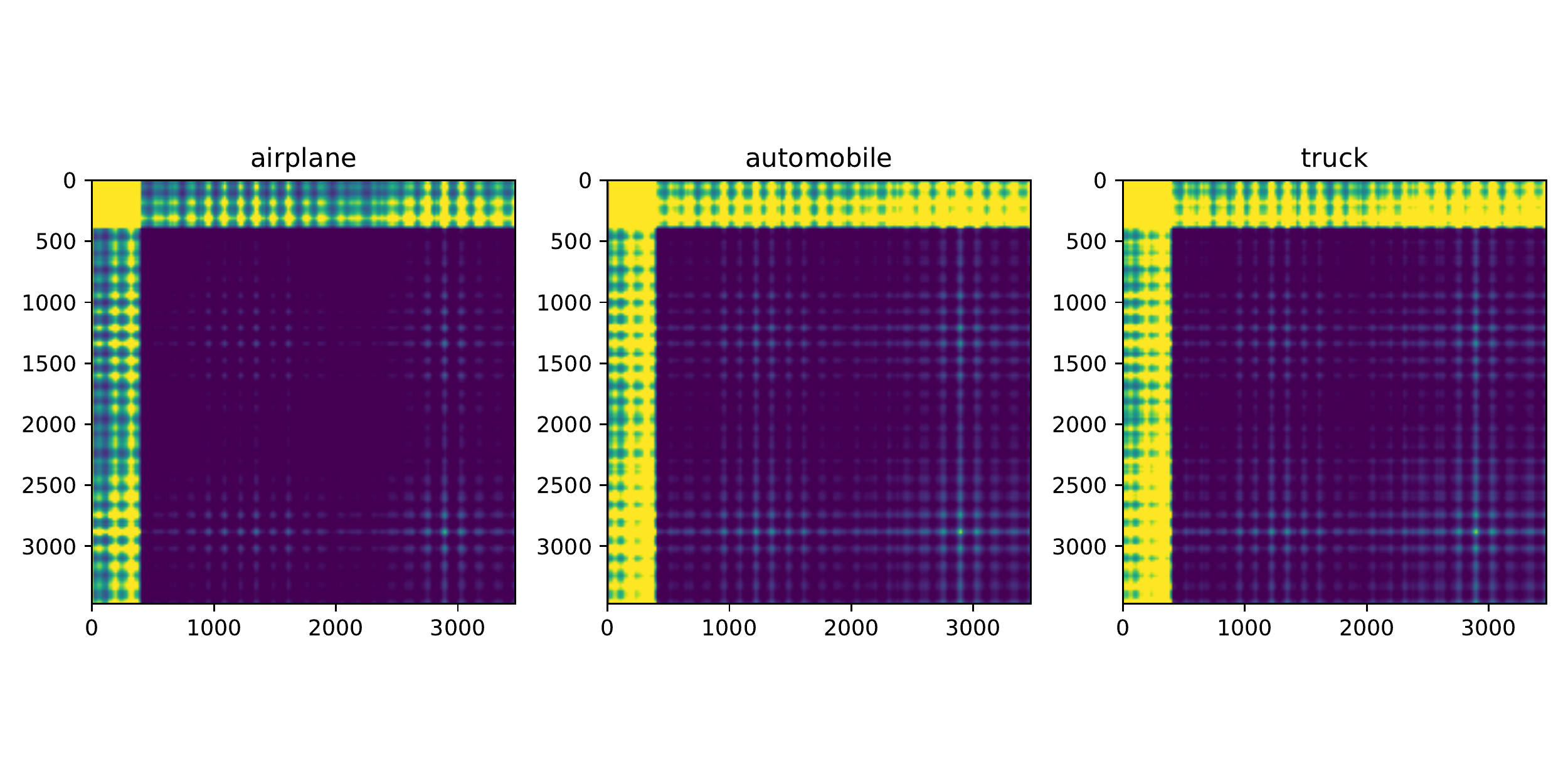}}
\vspace{-10mm}
\caption{Input data correlation matrix of grayscale CIFAR10 pre-processed with wavelet scattering transform. The correlation matrices are conditioned on the true labels, e.g. airplane (leftmost), automobile (middle), truck (rightmost). Lighter colors refer to stronger correlation.}
\label{fig:ws_cifar10_cov}
\end{center}
\vskip -0.2in
\end{figure}

\end{document}